\documentclass[english, preprint]{ipsj}

\usepackage[pdftex]{graphicx}
\usepackage[whole]{bxcjkjatype} 
\usepackage[unicode]{hyperref} 

\usepackage{autobreak}
\usepackage{algorithm}
\usepackage{alltt}
\usepackage{amsmath}
\usepackage{amssymb}
\usepackage{amsthm}
\usepackage{array}
\usepackage[noend]{algpseudocode}
\usepackage{bm}
\usepackage{bbm}
\usepackage{booktabs}
\usepackage{cite}
\usepackage{comment}
\usepackage{enumitem} 
\usepackage{etoolbox}
\usepackage{float}

\usepackage{cleveref}

\usepackage{latexsym}
\usepackage{mathtools}
\usepackage{mathrsfs}
\usepackage{nameref}
\usepackage{seqsplit}
\usepackage{textcomp}
\usepackage{times}
\usepackage{xcolor}
\usepackage{url}

\makeatletter
\def\ps@headings{%
  \let\@mkboth\@gobbletwo
  \let\@oddhead\@empty
  \let\@evenhead\@empty
  \let\@oddfoot\@empty
  \let\@evenfoot\@empty
}
\let\ps@IPSJTITLEheadings\ps@headings
\let\ps@empty\ps@headings
\makeatother
\def\sente{\UTF{2617}}
\def\gote{\UTF{2616}}

\def\Underline{\setbox0\hbox\bgroup\let\\\endUnderline}
\def\endUnderline{\vphantom{y}\egroup\smash{\underline{\box0}}\\}
\def\|{\verb|}

\makeatletter
\def\@uketsuke{}
\def\@euketsuke{}
\def\SHUBETUname{}
\makeatother

\makeatletter
\newcommand\newblock{\hskip .11em\@plus.33em\@minus.07em}
\makeatother

\theoremstyle{plain}
\newtheorem{thm}{Theorem}
\newtheorem*{thm*}{Theorem}
\theoremstyle{plain}
\newtheorem{lemma}{Lemma}
\newtheorem*{lemma*}{Lemma}
\theoremstyle{definition}
\newtheorem{dfn}{Definition}

\AtBeginDocument{
\abovedisplayskip     =0.5\abovedisplayskip
\abovedisplayshortskip=0.5\abovedisplayshortskip
\belowdisplayskip     =0.5\belowdisplayskip
\belowdisplayshortskip=0.5\belowdisplayshortskip
}

\algnewcommand\algorithmicinput{{\bfseries\gtfamily Input:}}%
\algnewcommand\algorithmicoutput{{\bfseries\gtfamily Output:}}%
\algnewcommand\AlgInput{\item[\algorithmicinput]}%
\algnewcommand\AlgOutput{\item[\algorithmicoutput]}%
\algrenewcommand\Return{\State\textbf{return} }%

\graphicspath{{images/}}

\title{High-Precision Estimation of the State-Space Complexity of Shogi via the Monte Carlo Method}

\affiliate{utokyo-c}{Graduate School of Arts and Sciences, The University of Tokyo}
\affiliate{utokyo-itc}{Information Technology Center, The University of Tokyo}

\author{Sotaro Ishii}{utokyo-c}[sotaro-ici@tanaka.ecc.u-tokyo.ac.jp]
\author{Tetsuro Tanaka}{utokyo-itc}[ktanaka@g.ecc.u-tokyo.ac.jp]

\begin{document}
\begin{abstract}
	Determining the state-space complexity of the game of Shogi (Japanese Chess) has been a challenging problem, with previous combinatorial estimates leaving a gap of five orders of magnitude ($10^{64}$ to $10^{69}$). This large gap arises from the difficulty of distinguishing Shogi positions legally reachable from the initial position among the vast number of valid board configurations. In this paper, we present a high-precision statistical estimation of the number of reachable positions in Shogi. Our method combines Monte Carlo sampling with a novel reachability test that utilizes a reverse search toward a set of "King-King only" (KK) positions, rather than a single-target backward search to the single initial position. This approach significantly reduces the search effort for determining unreachability. Based on a sample of 5 billion positions, we estimated the number of legal positions in Shogi to be $6.55 \times 10^{68}$ (to three significant digits) with a $3\sigma$ confidence level, substantially improving upon previously known bounds. We also applied this method to Mini Shogi, determining its complexity to be approximately $2.38 \times 10^{18}$.
\end{abstract}

\maketitle

\section{Introduction}
Many board games such as Chess, Go, and Shogi (Japanese Chess) are theoretically classified as two-player zero-sum perfect-information games. Numerous programs capable of playing such games at a superhuman level have been developed. Concurrently, research on the theoretical properties of these games has progressed. The primary focus of such research is to determine the ``game-theoretic value'' of a game position---that is, whether optimal play from that position leads to a win for Black, a win for White, or a draw. This process is called ``solving'' the game. Levels of solving a game are classified into the following three categories\cite{allis-1994}.

\begin{description}[itemsep=0pt]
	\item[Ultra-weakly solved] The game-theoretic value of the initial position is known, but the strategy to achieve this value is unknown.
	\item[Weakly solved] The game-theoretic value of the initial position is known, and the optimal moves at each position required to prove it are also known.
	\item[Strongly solved] The game-theoretic value and optimal moves are known for all positions reachable from the initial position. In this context, we say that ``a game position $p$ is reachable from another position $p'$'' if $p$ can be reached from $p'$ by applying a sequence of legal moves.
\end{description}

A game can be strongly solved by either exhaustively enumerating all positions reachable from the initial position, or performing retrograde analysis---a backward search from terminal positions toward the initial position\cite{thompson-retrograde}. However, these methods are infeasible for large-scale games. Therefore, estimating the scale of a game is important for assessing the feasibility of strongly solving it \cite{heule-2007}. Moreover, since the scale of a game is one of its key characteristics, this line of research has also been motivated by theoretical interest. One indicator of the ``scale'' of a game is the total number of positions reachable from the initial position, known as the ``state-space complexity''\cite{allis-1994}. State-space complexity has been studied for various board games\cite{shannon, allis-1988, allis-1994, chinchalkar, tromp-go-2006, takeda, compy-2020}.

Shinoda\cite{shinoda} pioneered the estimation of the number of legal Shogi positions. He showed that the number of Shogi positions lies in the range from $4.65 \times 10^{62}$ to $9.14 \times 10^{69}$. Specifically, he derived the upper-bound by first counting the total number of positions without considering any illegal configurations, and then excluding positions that violate the rules. The lower-bound was obtained by constructing specific patterns of reachable positions and counting those positions that match the patterns. Subsequently, Miyako et al.\cite{miyako} improved upon Shinoda's method and derived the range $2.45 \times 10^{64}$ to $6.78 \times 10^{69}$.

The bounds obtained in \cite{shinoda} and \cite{miyako} have a gap spanning more than five orders of magnitude. A major factor contributing to this large gap is that the lower-bound estimation only counts positions matching specific board configuration patterns, thereby considering only a small fraction of all positions. For games with few constraints on the legality of moves, it is relatively easy to obtain estimates close to the exact number of positions by enumerating board configurations. In contrast, in Shogi, the outcome is determined by the absence of legal moves to evade a check, and leaving one's King in check is illegal. These factors make it difficult to accurately count positions satisfying the condition of being reachable from the initial position by playing only legal moves.

For games where exact enumeration of positions is difficult, a method has been proposed that generates a large number of positions whose reachability is unknown, examines how many of them are reachable, and thereby estimates the number of positions statistically\cite{shinoda, tromp-cpr}. To apply this method, an algorithm for determining the reachability of a given position from the initial position is required. In this study, we propose a method that determines reachability by performing a backward best-first search from a generated position toward a set of positions satisfying specific conditions. When our proposed method was applied to standard Shogi and Mini Shogi\cite{umebayashi}, reachability from the initial position could be determined with relatively low search effort. As a result, we estimated the number of standard Shogi positions at $6.55 \times 10^{68}$ and the number of Mini Shogi positions at $2.38 \times 10^{18}$, both with three significant digits.

In this study, when a position $p$ is reachable from the initial position $p_0$ of the game, we simply say ``$p$ is reachable.''

\section{Shogi}

\subsection{Rules of Shogi}
Shogi is a two-player zero-sum perfect-information game widely played in Japan, in which the two players alternately make moves starting from the initial configuration shown in \figref{shogi-initpos}. We describe the rules relevant to this study, referring to the competition rules of the Japan Shogi Association\cite{shogi-rules}.

\begin{figure}[tb]
	\centering
	\begin{minipage}[c]{\columnwidth}
		\centering
		\includegraphics[width=0.85\columnwidth]{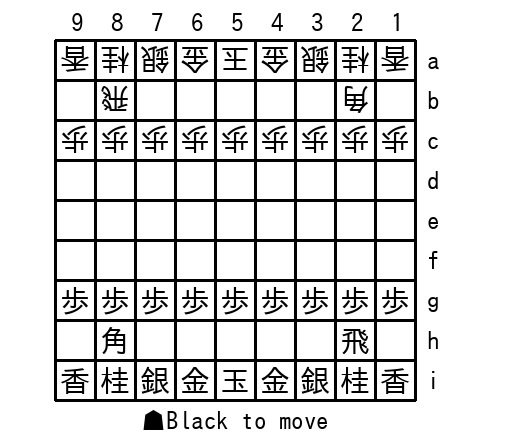}
	\end{minipage}
	\caption{The initial position of Shogi}
	\label{shogi-initpos}
\end{figure}

\begin{itemize}[itemsep=0cm]
	\item The player who moves first is called Black (先手; \emph{Sente}) and the one who moves second is called White (後手; \emph{Gote}). This is opposite to the naming convention in Chess.
	\item The game uses 40 pieces across 8 types: King (玉), Rook (飛), Bishop (角), Gold (金), Silver  (銀), Knight (桂), Lance (香), and Pawn (歩).
	\item Captured enemy pieces become the capturing player's pieces in hand. The player may drop them on any vacant square on the board, provided that the drop does not violate the rules described below.
	\item Rook, Bishop, Silver, Knight, Lance, and Pawn pieces can be promoted when entering or leaving the opponent's three ranks, transforming into Dragon (龍), Horse (馬), Promoted Silver (全), Promoted Knight (圭), Promoted Lance (杏), and Tokin (と), respectively. This transformation is called ``promotion.''
	\item When a piece satisfies the conditions for promotion, the player may choose whether to promote it. However, if declining promotion results in a piece with no legal moves (described below), then promotion is mandatory.
	\item ``Checkmate'' occurs when no move can prevent the King from being captured on the next turn. The game ends when this situation arises.
	\item A player must make a move on their turn. Passing is not permitted, even when no legal move is available. In Chess, such a situation where no legal move exists and the player is not in check is called ``stalemate'' and results in a draw. In Shogi, there are no specific rules regarding stalemate, but it is generally considered a loss for the player whose king is stalemated.
	\item When the same tuple of ``current side to move, board configuration, and pieces in hand'' appears four times during a single game, it is called ``repetition'' (\emph{Sennichite}). Under common convention, repetition results in a draw. Under the Japan Shogi Association's competition rules, repetition results in no contest, and the game is replayed with Black and White swapped.
	\item The following actions constitute rule violations (illegal moves):
	      \begin{description}
		      \item[Two Pawns] Placing a second unpromoted Pawn of the same player on a file (column) that already contains one.
		      \item[Piece with no legal moves] Placing one's own piece in a position from which it can never move for the rest of the game.
		      \item[Drop Pawn Mate] Checkmating the opponent's King by dropping a Pawn from hand. Some conventions distinguish ``Drop Pawn Stalemate'' (where a Pawn drop stalemates the opponent without giving check) from Drop Pawn Mate, not treating the former as illegal. However, in this study, we treat it as Drop Pawn Mate\footnote{As described later, the difference in this definition had almost no effect on the estimation of the number of positions.}.
		      \item[Suicide move / Ignoring a check] Making a move that allows one's own King to be captured, or failing to evade a check.
		      \item[Perpetual check] A special case of repetition (\emph{Sennichite}) in which all of one player's moves are checks. When this occurs, the player who gave the perpetual check loses.
	      \end{description}
\end{itemize}

\subsection{Mini Shogi}
Mini Shogi\cite{umebayashi}, devised by Shigenobu Kusumoto in 1970, is played on a $5 \times 5$ board with a reduced number and variety of pieces. As with standard Shogi, the objective is to checkmate the opponent's King, and the two players alternately make moves starting from the initial configuration shown in \figref{minishogi-initpos}. The differences between Mini Shogi and standard Shogi are as follows.

\begin{figure}[tb]
	\centering
	\begin{minipage}[c]{\columnwidth}
		\centering
		\includegraphics[width=0.85\columnwidth]{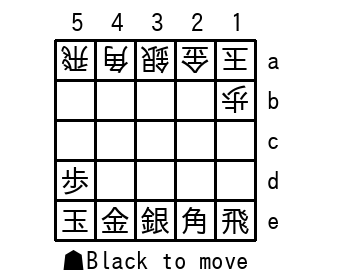}
	\end{minipage}
	\caption{The initial position of Mini Shogi}
	\label{minishogi-initpos}
\end{figure}

\begin{itemize}[itemsep=0cm]
	\item The game uses 12 pieces across 6 types: King, Rook, Bishop, Gold, Silver, and Pawn.
	\item Rook, Bishop, Silver, and Pawn pieces can be promoted when entering or leaving the opponent's first rank, transforming into Dragon, Horse, Promoted Silver, and Tokin, respectively.
	\item Other rules follow those of standard Shogi.
	\item In actual play, the rule ``repetition results in a win for White'' is often added to encourage Black to actively avoid repetition\cite{mizuta}.
\end{itemize}

\subsection{Definition of Position}\label{chapter:def-state}
In this study, we represent a Shogi position as a tuple of ``side to move, board configuration, and pieces in hand.'' Specifically, let $t \in \left\{ \mathrm{B}, \mathrm{W} \right\}$ ($\mathrm{B}$: Black, $\mathrm{W}$: White) denote the current side to move, $\bm{b}$ denote the board configuration, and $\bm{h}$ denote the pieces in hand. We define a Shogi position $p$ as the following 3-tuple.
\begin{equation}\label{def:shogi-pos}
	p \coloneq \left( t,\bm{b},\bm{h} \right)
\end{equation}
In the discussion of Shogi's state-space complexity, we consider only the above three components of information as the state and do not consider other information (such as the history of moves from the initial position). We justify this choice below.

Under Shogi's repetition (\emph{Sennichite}) rule, even for the same tuple of ``side to move, board configuration, and pieces in hand,'' the game-theoretic value may differ depending on how many times that tuple has appeared during play from the initial position. For example, consider a case where playing move $m$ from position $A$ leads to position $B$ and results in a win, while all other moves lead to a loss. If the number of times position $B$ has appeared from the initial position is at most 2, then one can play $m$ and win. However, if the count is 3, then the best strategy is to play $m$ and force a repetition, resulting in a draw (or a rematch). Nevertheless, suppose that the purpose of ``solving Shogi'' is limited to determining game-theoretic values under the assumption that each position appears at most once during play from the initial position. In that case, case analysis based on the number of occurrences becomes unnecessary. Therefore, defining a position solely by the tuple (side to move, board configuration, pieces in hand) is sufficient.

Moreover, the official competition rules of the Japan Shogi Association\cite{shogi-rules} stipulate that a game reaching 500 moves results in no contest. If we were to follow this rule, move count information would also need to be considered. However, in this study, we adopt more general rules and do not consider move count limits.

If the Entering King rule were taken into account, there could be cases where the symmetry between Black and White breaks down\footnote{``Entering King (入玉; \emph{Nyugyoku})'' refers to the King entering the opponent's promotion zone. When both Kings have entered the promotion zone, and it is difficult to reach a checkmate, the outcome is determined by counting the points of both players' pieces based on mutual agreement. Under the ``24-point rule'' adopted in professional tournaments, Black and White are symmetric, whereas under the ``27-point rule'' used in amateur tournaments, the victory conditions differ between Black and White, breaking the symmetry.}. However, since a player may continue the game without declaring even when the winning conditions are met, incorporating the Entering King declaration into the position count has little significance. In this study, we do not consider the Entering King declaration rule.

When analyzing a game, if multiple positions can be regarded as equivalent due to symmetry or other reasons, the set of equivalent positions is sometimes treated as a single position. In this study, we refer to this operation as ``canonicalization of positions.'' The number of positions becomes relevant when considering strongly solving a game. In Shogi, to determine the game-theoretic value of a position where it is White's turn, it suffices to know the game-theoretic value of the position obtained by swapping ``side to move, board configuration, and pieces in hand.'' Therefore, it is sufficient to consider only positions where it is Black's turn. Furthermore, since all piece movements in Shogi are left-right symmetric, the game-theoretic value of a position obtained by horizontally mirroring the board is the same as the original. Thus, positions obtained by horizontal mirroring can be treated as identical.

Let $\operatorname{Hflip}(p)$ denote the position obtained by horizontally mirroring the board configuration of position $p$. In this study, we fix the side to move to Black. Assuming that an order relation is defined on the set of positions, we treat the smaller of $p$ and $\operatorname{Hflip}(p)$ as the canonical position.

Note that previous studies on the number of Shogi positions\cite{shinoda, miyako} defined a position as a tuple of ``side to move, board configuration, and pieces in hand,'' and positions obtained by flipping all of ``side to move, board configuration, and pieces in hand'' were considered equivalent, while positions obtained by horizontally mirroring the board configuration were distinguished. However, given that the game-theoretic value of a horizontally mirrored position is the same, we believe our definition is more appropriate for characterizing the properties of the game.

\section{Statistical Estimation of the Number of Positions and Enumeration of Candidate Positions}

\subsection{Principle of Statistical Estimation of the Number of Positions} \label{chapter:estimate-method}
As discussed in the Introduction, for games such as Connect Four\cite{allis-1988} and Go\cite{tromp-go-2006}, which have few constraints on move legality, estimates close to the exact number of positions can be obtained relatively easily by enumerating positions. However, in the case of Shogi, it is difficult to exactly determine the number of positions satisfying the condition ``reachable from the initial position by playing only legal moves under the rules'' within a practical amount of time.

\begin{figure}[tb]
	\centering
	\includegraphics[width=1\columnwidth]{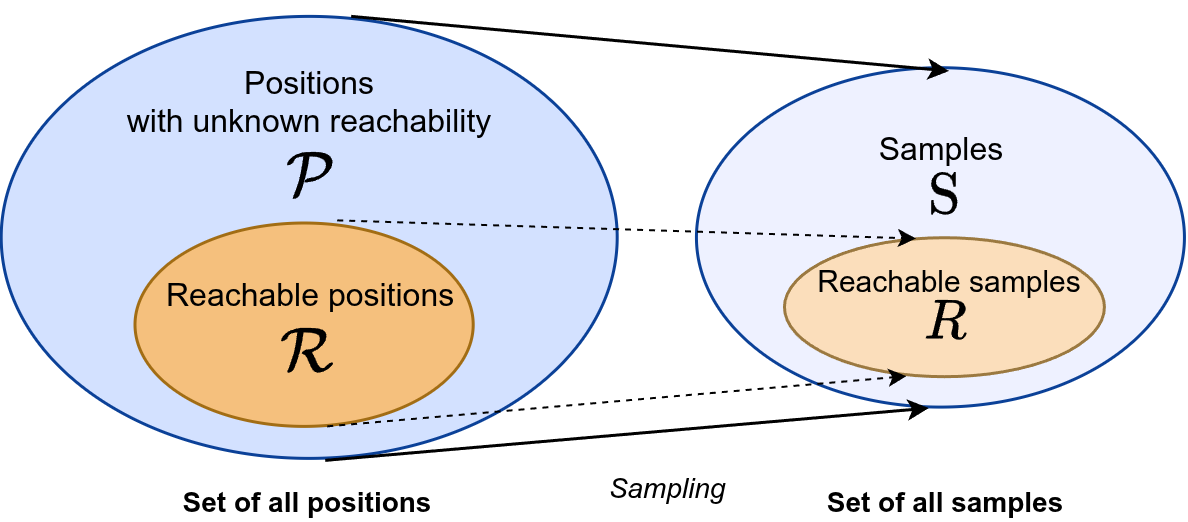}
	\caption{Schematic diagram of the statistical estimation of the number of positions}
	\label{model}
\end{figure}

Therefore, instead of determining the exact number of Shogi positions, we attempt to estimate it with high precision using a Monte Carlo method. Specifically, we define the set $\mathcal{P}$ of all positions where it is Black's turn, and extract a sample set $S$ from $\mathcal{P}$ uniformly at random. Next, we apply a reachability determination algorithm to each element of $S$ and obtain the number of elements in the set $R$ of reachable positions contained in $S$. Using this information, we can estimate the proportion $p_\mathcal{R} = \frac{| \mathcal{R} |}{| \mathcal{P} |}$ of reachable positions within $\mathcal{P}$. \figref{model} provides a schematic representation of this method.

Let $\hat p=\frac{|R|}{|S|}$ denote the sample estimate of $p_\mathcal{R}$. Then the $3\sigma$ confidence interval for $p_\mathcal{R}$ based on the normal approximation to the binomial distribution is given by \eqref{pr-interval}.
\begin{align} \label{pr-interval}
	\hat{p} - 3 \cdot \sqrt{\frac{\hat{p}\left( 1 - \hat{p} \right)}{|S|}} \leq p_{\mathcal{R}} \leq \hat{p} + 3 \cdot \sqrt{\frac{\hat{p}\left( 1 - \hat{p} \right)}{|S|}}
\end{align}
Since $|\mathcal{R}| = p_\mathcal{R} \cdot |\mathcal{P}|$ by definition, we can compute a confidence interval for $|\mathcal{R}|$ using \eqref{pr-interval}.

\subsection{Enumeration of Candidate Positions}
To execute the method described in \cref{chapter:estimate-method}, we need to count the number of elements in $\mathcal{P}$ and sample elements from $\mathcal{P}$ uniformly at random. We sampled from $\mathcal{P}$ using ranking and unranking\cite{kreher1998combinatorial}. Ranking is the process of assigning a unique integer (rank) in the range $\{0,1,2,\dots,|\mathcal{P}|-1\}$ to each element of  $\mathcal{P}$. The inverse operation is called unranking. By ranking and unranking $\mathcal{P}$, we can extract samples from $\mathcal{P}$ uniformly at random.

In this section, we describe the definition of $\mathcal{P}$ for Shogi and the method for counting $|\mathcal{P}|$. Since the rules of Mini Shogi are a subset of those of standard Shogi, the method described in this section can also be used to compute the number of candidate positions for Mini Shogi.


\begin{figure}[tb]
	\centering
	\resizebox{\columnwidth}{!}{
		\begin{tabular}{|*{9}{c|}}\hline
			(0,0) & (1,0) & (2,0) & (3,0) & (4,0) & (5,0) & (6,0) & (7,0) & (8,0) \\\hline
			(0,1) & (1,1) & (2,1) & (3,1) & (4,1) & (5,1) & (6,1) & (7,1) & (8,1) \\\hline
			(0,2) & (1,2) & (2,2) & (3,2) & (4,2) & (5,2) & (6,2) & (7,2) & (8,2) \\\hline
			(0,3) & (1,3) & (2,3) & (3,3) & (4,3) & (5,3) & (6,3) & (7,3) & (8,3) \\\hline
			(0,4) & (1,4) & (2,4) & (3,4) & (4,4) & (5,4) & (6,4) & (7,4) & (8,4) \\\hline
			(0,5) & (1,5) & (2,5) & (3,5) & (4,5) & (5,5) & (6,5) & (7,5) & (8,5) \\\hline
			(0,6) & (1,6) & (2,6) & (3,6) & (4,6) & (5,6) & (6,6) & (7,6) & (8,6) \\\hline
			(0,7) & (1,7) & (2,7) & (3,7) & (4,7) & (5,7) & (6,7) & (7,7) & (8,7) \\\hline
			(0,8) & (1,8) & (2,8) & (3,8) & (4,8) & (5,8) & (6,8) & (7,8) & (8,8) \\\hline
		\end{tabular}
	}
	\caption{Coordinates $(x,y)$ on the board}
	\label{board-coordinates}
\end{figure}

As described above, in this study, each Shogi position is represented as a tuple of ``side to move,'' ``board configuration,'' and ``pieces in hand.'' We define the $(x,y)$ coordinates of the Shogi board as seen from Black's perspective, as shown in \figref{board-coordinates}, and define the order relation on two distinct squares $(x,y), (x',y')$ as follows.
\begin{align}
	(x, y) < (x', y') \Longleftrightarrow(x < x') \: \lor \:(x = x' \: \land \: y < y')
\end{align}

We use $i \in \left\{ 0,1,\ldots,7 \right\}$ for the type of piece before promotion (hereafter called ``basic piece type''), where $0,1,\ldots,7$ correspond to King, Gold, Knight, Lance, Pawn, Silver, Rook, and Bishop, respectively. Let $s_{i}$ denote the total number of a basic piece type $i$. From the rules, we have $\left( s_{0},s_{1},\ldots,s_{7} \right) = (2,4,4,4,18,4,2,2)$.

First, we determine the number of pieces in hand. We denote the two players as $\left\{ \mathrm{B},\mathrm{W} \right\}$, representing Black and White, respectively. For basic piece type $i$, let $h_{i,\mathrm{B}}$ be the number of pieces of type $i$ held by Black and $h_{i,\mathrm{W}}$ be the number held by White. We define the following (we include $i = 0$ for notational convenience, although Kings cannot be held in hand):
\begin{align}
	\bm{h}_{i} \coloneq \left( h_{i,\mathrm{B}},h_{i,\mathrm{W}} \right), \: \bm{h} \coloneq \left( \bm{h}_{0},\bm{h}_{1},\ldots,\bm{h}_{7} \right)
\end{align}

Furthermore, we express the total number $u_{i}$ of pieces of basic piece type $i$ held in hand as follows.
\begin{equation}\label{eq:hand_pieces}
	u_{i} \coloneq h_{i,\mathrm{B}} + h_{i,\mathrm{W}}, \: \bm{u} \coloneq (u_{0},u_{1},\ldots,u_{7})
\end{equation}

Next, we describe the representation of pieces on the board. Pieces on the board can be classified according to two attributes---ownership (``Black/White'') and promotion status (``promoted/unpromoted.'')---yielding four categories. We denote promotion status by $\left\{ + , - \right\}$ ($+$: promoted, $-$: unpromoted). For side to move $t \in \left\{ \mathrm{B},\mathrm{W} \right\}$ and $\sigma \in \left\{ + , - \right\}$, we define $B_{i,t,\sigma}$ as the set of squares occupied by pieces of basic piece type $i$ with these attributes, and write its cardinality as $\left| B_{i,t,\sigma} \right|$.

Let $\bm{B}_{i}$ denote the tuple combining the four attributes ``Black/White'' and ``promoted/unpromoted'' for each basic piece type $i$, and let $\bm{B}$ represent the entire board configuration as a tuple of each $\bm{B}_{i}$\footnote{King and Gold, which have no promoted forms, are also represented as 4-tuples. The elements corresponding to promoted pieces are always empty lists.}.
\begin{gather}
	\bm{B}_{i} \coloneq \left( B_{i,\mathrm{B}, +},B_{i,\mathrm{W}, +},B_{i,\mathrm{B}, -},B_{i,\mathrm{W}, -} \right) \notag \\
	\bm{B} \coloneq \left( \bm{B_{0}},\bm{B_{1}},\ldots,\bm{B_{7}} \right)
\end{gather}

\begin{figure}[tb]
	\centering
	\begin{minipage}[c]{\columnwidth}
		\centering
		\includegraphics[width=0.85\columnwidth]{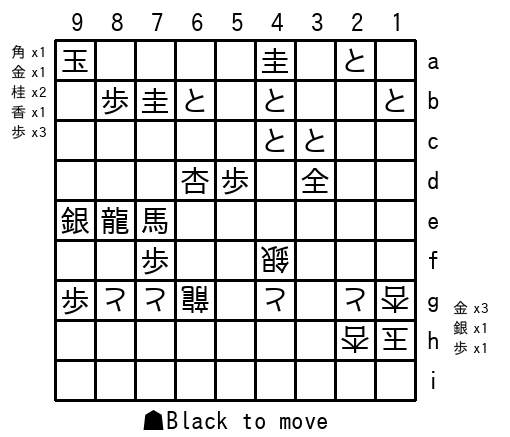}
	\end{minipage}
	\caption{The final position of Game 4 of the 2nd Den-O Sen.}
	\label{puella-tsukada}
\end{figure}

For example, considering the Silver (basic piece type $i = 5$) on the board in \figref{puella-tsukada}: Black's Promoted Silver is at $(6,3)$, Black's Silver is at $(0,4)$, and White's Silver is at $(5,5)$. Thus, the representation is as follows.
\begin{gather}
	B_{5,\mathrm{B}, +} =\{ (6,3) \},\: B_{5,\mathrm{W}, +} = \emptyset,\notag \\
	B_{5,\mathrm{B}, -} =\{ (0,4) \},\: B_{5,\mathrm{W}, -} =\{ (5,5) \} \notag \\
	\therefore\bm{B}_{5} = ( \:\{ (6,3) \},\emptyset,\{ (0,4) \}, \{ (5,5) \}\: )
\end{gather}

\begin{figure*}[tb]
	\centering
	\includegraphics[width=2\columnwidth]{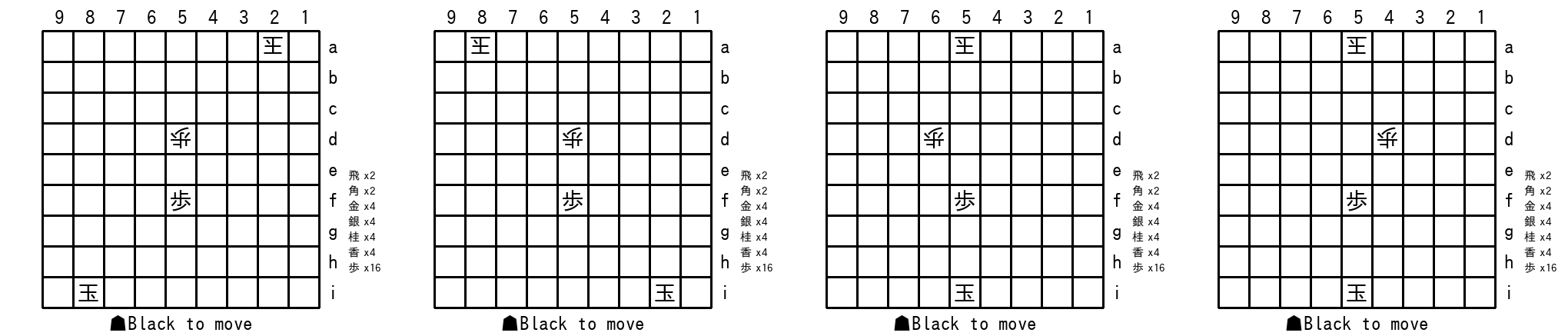}
	\caption{K-canonical positions and non-K-canonical positions}
	\label{kregular}
\end{figure*}

\subsubsection{K-Canonical Positions}
Since the King cannot be promoted, the coordinates of Black's King and White's King in position $p$ are uniquely contained in $B_{0,\mathrm{B}, -}$ and $B_{0,\mathrm{W}, -}$, respectively. We define the King placement of position $p$ as
\begin{equation}
	\kappa(p) \coloneq \left( B_{0,\mathrm{B}, -},\:  B_{0,\mathrm{W}, -}  \right)
\end{equation}
Here, we introduce ``K-canonical position'' as a weaker constraint than canonical position.

\begin{dfn}[K-canonical position]
	Assuming that an order relation is defined on the set of positions, we define a position $p$ to be a ``K-canonical position'' when it satisfies the following conditions.
	\begin{itemize}[itemsep=0cm]
		\item $p$ is a Black-to-move position.
		\item $\kappa(p) \le \kappa(\operatorname{Hflip}(p))$ holds. That is, the King placement $\kappa(p)$ of $p$ is no greater than the King placement $\kappa(\operatorname{Hflip}(p))$ of the horizontally mirrored position $\operatorname{Hflip}(p)$.
	\end{itemize}
\end{dfn}

Let $p_{1}$ and $p_{2}$ denote the first and second positions from the left in \figref{kregular}, respectively. Then
\begin{equation}\begin{array}{r}
		\bm{B}_{0}\left( p_{1} \right) = \left( \emptyset,\emptyset,\{ (1,8) \},\{ (7,0) \} \right), \\
		\bm{B}_{0}\left( p_{2} \right) = \left( \emptyset,\emptyset,\{ (7,8) \},\{ (1,0) \} \right)
	\end{array}\end{equation}

Since $\kappa(p_{1}) \prec \kappa(p_{2})$ holds, $p_{1}$ is a K-canonical position and $p_{2}$ is not. Similarly, the third position $p_{3}$ and fourth position $p_{4}$ from the left in \figref{kregular} coincide when horizontally mirrored. Furthermore, since their King placements are equal, as shown below, both are K-canonical positions. However, since $p_{3} < p_{4}$, $p_{3}$ is a canonical position while $p_{4}$ is not.
\begin{equation}
	\bm{B}_{0}\left( p_{3} \right) = \bm{B}_{0}\left( p_{4} \right) = \left( \emptyset,\emptyset, \{ (4,8) \},\{ (4,0) \} \right)
\end{equation}

We denote the set of all K-canonical positions of Shogi by $\mathcal{P}_{K}$, and compute its cardinality $\left| \mathcal{P}_{K} \right|$ later.

\subsubsection{Total Number of Pieces in Hand and on the Board}
For the total number of pieces in hand $u_{i}$ defined in \eqref{eq:hand_pieces}, since Kings cannot be held in hand, $u_{0} = 0$ always holds, and $0 \leq u_{i} \leq s_{i}(1 \leq i \leq 7)$. Moreover, the number of pieces of basic piece type $i$ on the board satisfies the following.
\begin{equation}\label{b_and_h}
	\sum_{t \in \left\{ \mathrm{B},\mathrm{W} \right\}}\sum_{\sigma \in \left\{ + , - \right\}}\left| B_{i,t,\sigma} \right| = s_{i} - u_{i}\quad(0 \leq i \leq 7)
\end{equation}

Since both Kings must always be present on the board and Kings and Golds have no promoted forms, the following holds.
\begin{gather}
	\left| B_{0,\mathrm{B}, +} \right| = 0,\quad\left| B_{0,\mathrm{W}, +} \right| = 0,\quad\left| B_{0,\mathrm{B}, -} \right| = 1,\quad\left| B_{0,\mathrm{W}, -} \right| = 1 \notag \\
	\left| B_{i,t, +} \right| = 0\quad(0 \leq i \leq 1,\: t \in \left\{ \mathrm{B},\mathrm{W} \right\})
\end{gather}

We represent the number of pieces on the board for each attribute of basic piece type $i$ as the vector $\bm{n}_{i}$.
\begin{equation}
	\bm{n}_{i} \coloneq \left( \left| B_{i,\mathrm{B}, +} \right|,\left| B_{i,\mathrm{W}, +} \right|,\left| B_{i,\mathrm{B}, -} \right|,\left| B_{i,\mathrm{W}, -} \right| \right)
\end{equation}

When the number of pieces of basic piece type $i$ on the board is $n$, we define the set $\mathcal{N}_{i,n}$ of possible values of $\bm{n}_{i}$ as follows.
\begin{equation}\label{cal_n_def}
	\left\{
	\begin{aligned}
		\mathcal{N}_{0,2} & = \left\{ (0,0,1,1) \right\},                                                                                                \\
		\mathcal{N}_{1,n} & = \left\{ \left( 0,0,n_{2},n_{3} \right) \middle|
		\begin{gathered}
			0 \leq n_{2} \leq n,\\ 0 \leq n_{3} \leq n, \\ n_{2} + n_{3} = n ,
		\end{gathered} \right\}                                                                                \\
		\mathcal{N}_{i,n} & = \left\{ \left( n_{0},n_{1},n_{2},n_{3} \right) \middle| 0 \leq n_{a} \leq n,\:\sum_{j = 0}^{3}n_{j} = n \right\} \ (i > 1)
	\end{aligned}
	\right.
\end{equation}

\subsubsection{Number of Board Configurations and Piece-in-Hand Combinations}\label{chapter:board-configuration}
Given $\bm{u}$, the total number of hand-piece distributions for each player $N_{\text{hand}}\left( \bm{u} \right)$ can be computed as follows.
\begin{equation}\label{n_hand_h_eq}
	N_{\text{hand}}\left( \bm{u} \right) = \prod_{i = 1}^{7}\left( u_{i} + 1 \right)
\end{equation}

We compute the total number of board configurations $N_{\text{board}}\left( \bm{u} \right)$ corresponding to $\bm{u}$. From \eqref{b_and_h}, once $\bm{h}$ is determined, the number of pieces on the board is also determined. First, we define the total number of arrangements $N_{\text{comb}}\left( e,\bm{n}_{i} \right)$ when placing pieces of basic piece type $i$ with per-attribute board piece counts $\bm{n}_{i} = \left( n_{0},n_{1},n_{2},n_{3} \right)$ on a board with $e$ empty squares.
\begin{multline}
	N_{\text{comb}}\left( e,\bm{n}_{i} \right) \\ \coloneq  \binom{e}{n_{0}}\binom{e - n_{0}}{n_{1}}\binom{e - n_{0} - n_{1}}{n_{2}}\binom{e - n_{0} - n_{1} - n_{2}}{n_{3}}
\end{multline}

We write the number of ways to place pieces of basic piece type $i$ on the board as $N_{\text{btype}}\left( \bm{u},i \right)$. For the King ($i = 0$), by the definition of K-canonical positions, the $x$ coordinate of Black's King is restricted to $0 \leq x \leq 4$. When Black's King is on one of the $4 \times 9$ squares satisfying $0 \leq x \leq 3$, White's King can be placed on $9 \times 9 - 1$ squares; when Black's King is on one of the $9$ squares satisfying $x = 4$, White's King can be placed on $5 \times 9 - 1$ squares. Note that this value does not depend on $\bm{u}$.
\begin{align}
	N_{\text{btype}}(\bm{u},0) & = 4 \times 9 \times (9 \times 9 - 1) + 1 \times 9 \times (5 \times 9 - 1) \notag \\
	                           & = 3276
\end{align}

For basic piece types $i \geq 1$, the number of pieces on the board is $s_{i} - u_{i}$. Therefore
\begin{multline}\label{n_btype_comb_def}
	N_{\text{btype}}\left( \bm{u},i \right) \\ = \sum_{\bm{n}_{i} \in \mathcal{N}_{i,s_{i} - u_{i}}}\left( N_{\text{comb }}\left( 81 - \sum_{i' < i}\left( s_{i'} - u_{i'} \right),\bm{n}_{i} \right) \right)\:(i \geq 1)
\end{multline}
From the above, we have
\begin{equation}
	N_{\text{board}}\left( \bm{u} \right) \coloneq \prod_{i = 0}^{7}N_{\text{btype}}\left( \bm{u},i \right)
\end{equation}

\subsubsection{Total Number of Elements}
The total number of K-canonical positions with hand-piece totals $\bm{u}$, denoted by $N\left( \bm{u} \right)$, is
\begin{equation}
	N\left( \bm{u} \right) = N_{\text{hand }}\left( \bm{u} \right) \cdot N_{\text{board }}\left( \bm{u} \right)
\end{equation}

The set of possible hand-piece totals $\bm{u}$ is
\begin{equation}
	\mathcal{U} \coloneq \left\{ \bm{u}~|~u_{0} = 0,\: 0 \leq u_{i} \leq s_{i}\:(i = 1,2,\ldots,7) \right\}
\end{equation}
Since the cardinality of $\mathcal{U}$ is sufficiently small as shown below, the entire set fits in main memory on a typical computer.
\begin{equation}
	\left| \mathcal{U} \right| = \prod_{i = 1}^{7}\left( s_{i} + 1 \right) = 106875
\end{equation}

By computing the formulas described above, we obtain the cardinality of the entire set $\mathcal{P}_{K}$ of K-canonical positions as follows.
\begin{align}
	|\mathcal{P}_{K}| & = \sum_{\bm{u} \in \mathcal{U}} N(\bm{u}) \notag   \\
	                  & = 808809320797678351777732040093287698 \notag      \\
	                  & \quad \ 12438521503800714936366945233084532 \notag \\
	(                 & \approx 8.09 \times 10^{70})
\end{align}

Similarly, the total number of K-canonical positions for Mini Shogi is found to be
\begin{equation}
	16014219505238849250 \: \left( \approx 1.6 \times 10^{19} \right)
\end{equation}

Assigning a unique integer (rank) in the range from $0$ to $\left| \mathcal{P}_{K} \right| - 1$ to each element of $\mathcal{P}_{K}$, we can draw sample positions from $\mathcal{P}_{K}$ with equal probability by generating uniform random numbers within this range. In this study, instead of sampling from the set $\mathcal{P}$ of all Black-to-move positions, we sampled from the set $\mathcal{P}_{K}$ of all K-canonical positions.

\subsection{$n$-Ply Predecessor Positions and Reverse Search}\label{chapter:method-prev}
To execute the method described in \cref{chapter:estimate-method}, an algorithm for determining the reachability of each element of the sample set $S$ is required. To establish the reachability or unreachability of a position $p$, one approach is to apply a graph search algorithm with the initial position as the start node and $p$ as the goal node. However, if the position $p$ given to the graph search algorithm is unreachable, then without using game-specific pruning, proving unreachability through search alone requires exploring all nodes reachable from the start node. For most games, searching all nodes reachable from the initial position does not terminate within a practical amount of time.

Therefore, we adopted a method of performing a reverse search with $p$ as the start node and the initial position as the goal node. For games where unreachable positions tend to have reverse search trees that terminate within a small number of nodes, it is expected that reachability can be effectively determined by reverse search. As described later, many positions in standard Shogi and Mini Shogi exhibit this property, and consequently, we observed a tendency for computations to terminate within a practical amount of time.

Let $\mathrm{Moves}(p)$ denote the set of all legal moves at position $p$, and let $\mathrm{Next}(p, m)$ denote the position reached when the side to move plays legal move $m$ at position $p$. Then the set $\mathrm{Prev}_n(p)$ of all $n$-ply ($n \ge 1$) predecessor positions of $p$ can be defined as follows.
\begin{align}
	\mathrm{Prev}_{1}(p)     & \coloneq \{ p' \mid \exists m \in \mathrm{Moves}(p') \ \text{s.t.} \ \mathrm{Next} (p', m) = p \} \notag \\
	\mathrm{Prev}_{n + 1}(p) & \coloneq \bigcup_{p' \in \mathrm{Prev}_{n}(p)} \mathrm{Prev}_{1}(p') \quad (n \ge 1)
\end{align}
In Shogi, $\mathrm{Prev}_n(p)$ can be enumerated at speeds comparable to those of forward transitions, enabling fast reverse search.

\section{Detection of Reachable Positions in the Sample}

\subsection{Exclusion of Non-Canonical Positions}\label{chapter:shogi-remove-flip}
We checked whether the horizontally mirrored version of each position was smaller under the order relation, and if so, we excluded the position from the sample set.

For example, in the position shown in \figref{flipH-NG}, the representation of the King placement is $\mathbf{B}_0 = \left( \emptyset, \emptyset,\{(4, 4)\},\{(4, 8)\} \right)$, which does not change under horizontal mirroring. On the other hand, the representation of the Gold placement $\mathbf{B}_1 = \left( \emptyset, \emptyset,\{(6, 5), (8, 0)\},\{(1, 7), (3, 0)\} \right)$ becomes $\mathbf{B}'_1 = \left( \emptyset, \emptyset,\{(0, 0), (2, 5)\},\{(5, 0), (7, 7)\} \right)$ when horizontally mirrored, which is smaller in lexicographic order than the original, thus this position is excluded from the sample set.

Since the King position restrictions are already imposed during the generation of $\mathcal{P}_K$, the positions excluded here are limited to cases where both the player's King and the opponent's King are on the 5th file.

\begin{figure}[tb]
	\centering
	\begin{minipage}[c]{\columnwidth}
		\centering
		\includegraphics[width=0.85\columnwidth]{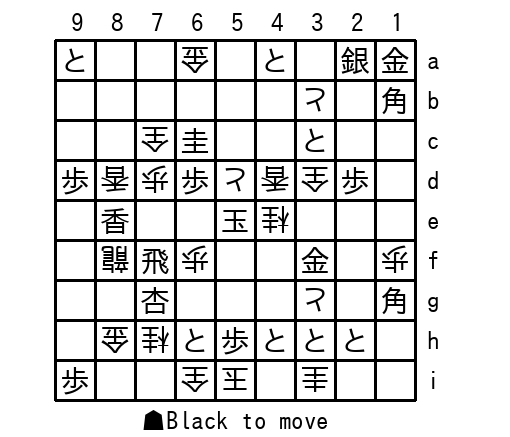}
	\end{minipage}
	\caption{Non-canonical position}
	\label{flipH-NG}
\end{figure}

\subsection{Exclusion of Unreachable Positions Based on Rule Violations}\label{chapter:shogi-remove-illegal}
Positions that violate rules are inherently unreachable and can be immediately excluded from the sample.
\begin{description}[itemsep=0pt]
	\item[Violations related to piece placement] Positions in which the board configuration contains Two Pawns or a piece with no legal moves. An example is shown in \figref{piece-NG}.
	\item[Violations related to check] Positions in which the King of the non-moving side (the player not to move) is in check. An example is shown in \figref{check-NG}. Such positions are unreachable unless the non-moving side previously made an illegal move (suicide move or ignoring a check).
\end{description}

\begin{figure}[tb]
	\centering
	\begin{minipage}[c]{\columnwidth}
		\centering
		\includegraphics[width=0.85\columnwidth]{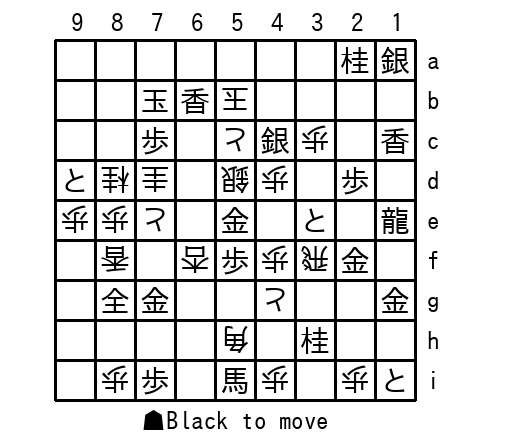}
	\end{minipage}
	\caption{Piece placement violations}
	\label{piece-NG}
\end{figure}

\begin{figure}[tb]
	\centering
	\begin{minipage}[c]{\columnwidth}
		\centering
		\includegraphics[width=0.85\columnwidth]{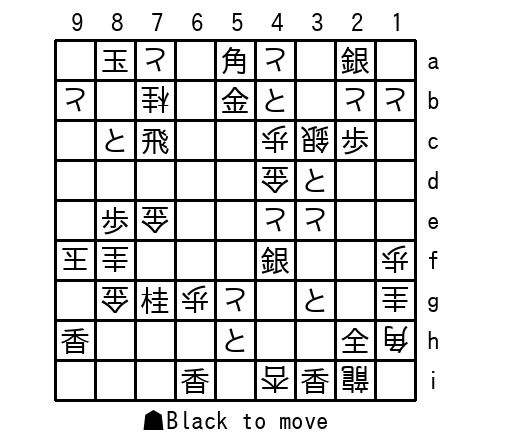}
	\end{minipage}
	\caption{Ignoring a check}
	\label{check-NG}
\end{figure}

\subsection{Determination of Reachable Positions Using Search}\label{chapter:shogi-search-kk}

\subsubsection{KK Positions}
As described in \cref{chapter:method-prev}, a standard reverse search uses $p$ as the start node and the initial position as the goal node. Instead of the initial position, we introduce the set of ``KK (King-King only) positions'' defined below, and propose a method for determining the reachability of Shogi positions by searching from $p$ as the start node toward the set of KK positions as the goal set.

\begin{dfn}[KK position]\label{def:kk}
	A Shogi position $p$ is called a ``KK position'' when it satisfies all of the following conditions.
	\begin{enumerate}[itemsep=0cm]
		\item Only the Black King $K_{\mathrm{B}}$ and the White King $K_{\mathrm{W}}$ are present on the board.
		\item $K_{\mathrm{B}}$ and $K_{\mathrm{W}}$ are separated by a distance of at least two squares in either the $x$ or $y$ direction. That is, denoting the $(x,y)$ coordinates of $K_{\mathrm{B}}$ and $K_{\mathrm{W}}$ as $(x_{K_{\mathrm{B}}}, y_{K_{\mathrm{B}}})$ and $(x_{K_{\mathrm{W}}}, y_{K_{\mathrm{W}}})$, respectively, we have $|x_{K_{\mathrm{B}}} - x_{K_{\mathrm{W}}}| \ge 2 \lor |y_{K_{\mathrm{B}}} - y_{K_{\mathrm{W}}}| \ge 2$.
	\end{enumerate}
\end{dfn}

Among positions satisfying only Condition 1 of \cref{def:kk} but not Condition 2, there are situations where the two Kings give check to each other (and thus involve a suicide move). Condition 2 is imposed to exclude such positions and define ``KK positions'' consistently. Furthermore, from Condition 1, all pieces other than Kings must be held in hand in a KK position. However, we impose no constraints on which pieces each player holds and in what quantities. An example of a KK position is shown in \figref{kk-example}. In this example, all pieces except the Kings are held by Black.

\begin{figure}[tb]
	\centering
	\begin{minipage}[c]{\columnwidth}
		\centering
		\includegraphics[width=0.85\columnwidth]{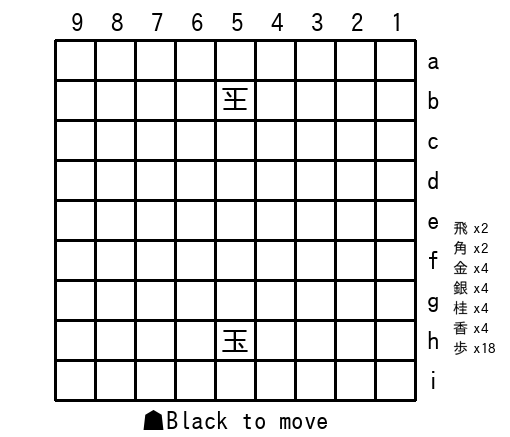}
	\end{minipage}
	\caption{A KK position}
	\label{kk-example}
\end{figure}

The validity of this method is guaranteed by \cref{thm:kk-reachable} and \cref{thm:kk-unreachable} below. The proof of \cref{thm:initial-kk-each-other} used here is given in \cref{chapter:kk-reachable}.

\begin{thm}\label{thm:kk-reachable}
	If a position $p$ is reachable from a KK position, then $p$ is also reachable from the initial position.
\end{thm}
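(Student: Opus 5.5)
The plan is to reduce the statement to one auxiliary fact: \emph{every KK position is reachable from the initial position} $p_0$. I take this to be the content of \cref{thm:initial-kk-each-other}, which the paper proves in \cref{chapter:kk-reachable}. Given that fact, the theorem follows from transitivity of reachability. If $q$ is a KK position from which $p$ is reachable, concatenate a legal move sequence $p_0 \to q$ with one $q \to p$ to get a legal sequence $p_0 \to p$. Legality of each move depends only on the current position $(t,\bm{b},\bm{h})$, which is exactly the state we track. Repetition and perpetual check concern the history, not single moves; I would note that any path may be shortened to one without repeated positions, so no fourfold repetition ever occurs along the concatenation. So the real work is the auxiliary fact, which I would prove in two stages.

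\textbf{Stage 1 (mutual reachability of KK positions).} I would show that from any KK position one can reach any other, with arbitrary King squares, hand distribution and side to move.
\begin{itemize}
\item \emph{King relocation.} With only Kings on the board, either King can walk anywhere by single steps, keeping the two Kings at distance at least two. The only potential obstruction is the other King, which can always be walked around.
\item \emph{Transferring a hand piece.} To move a piece from player $X$'s hand to player $Y$'s, let $X$ drop it on a square adjacent to $Y$'s King, and let $Y$'s King capture it on the next ply. The dropped piece is unprotected, since only Kings are on the board, so the capture is legal.
\item \emph{Checking the drop rules.} The drop square must avoid the ``piece with no legal moves'' ranks for Pawn, Lance and Knight. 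Two Pawns cannot arise, because no Pawn is on the board. Drop Pawn Mate cannot arise, because $Y$ can capture the Pawn. I also need the capture to leave the Kings at distance at least two, which a prior relocation of $X$'s King arranges.
\item \emph{Adjusting side to move.} Each transfer takes two plies and returns the move to $X$. A single extra King move by the appropriate side at the end gives any prescribed side to move.
\end{itemize}

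\textbf{Stage 2 (reaching some KK position from $p_0$).} It remains to show that at least one KK position is reachable from $p_0$. This requires a legal game in which all 38 non-King pieces are captured. I expect this to be the main obstacle, since it is an explicit construction rather than a structural argument. What I would try first is a cooperative ``helpmate-style'' sequence:
\begin{itemize}
\item open lines and trade Rooks and Bishops;
\item let each player's pieces be captured one at a time, checking that neither King is ever left in check;
\item afterwards, have the capturing side drop its captured pieces next to the opposing King, which then captures them, exactly as in Stage 1. This clears any pieces that were placed back on the board until only the Kings remain.
\end{itemize}
Rather than argue abstractly, I would present a concrete game record, verified by computer for legality, and end at the first position with only the two Kings on the board. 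Condition 2 of \cref{def:kk} then holds automatically in that final position, since adjacent Kings would mean the side not to move is in check. Combining Stage 2 with Stage 1 gives the auxiliary fact, and hence the theorem.
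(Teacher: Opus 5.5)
Your proof is correct and is the paper's argument. The paper also proves this theorem in two lines: it concatenates a move sequence from the initial position to a KK position (supplied by \cref{thm:initial-kk-each-other}) with the given sequence from that KK position to $p$, and your remark that repeated positions can be cut out, so no repetition or perpetual-check issue arises, is a small added point of rigor. Your two-stage re-derivation of \cref{thm:initial-kk-each-other} is not needed here (the paper proves it in the appendix with an explicit 44-ply game and the region lemmas), but if you keep it, note that a ``single extra King move'' does not fix the side to move without displacing a King; you need an odd closed King walk such as a 3-move triangle (the paper's $C_3$), with the other King making waiting moves meanwhile.
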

\begin{proof}
	By \cref{thm:initial-kk-each-other}, every KK position is reachable from the initial position. Therefore, if $p$ is reachable from a KK position, there exists a sequence of moves from the initial position to $p$ via some KK position.
\end{proof}

\begin{thm}\label{thm:kk-unreachable}
	If a position $p$ is unreachable from any KK position, then $p$ is also unreachable from the initial position.
\end{thm}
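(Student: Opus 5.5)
The plan is to prove the contrapositive: if $p$ is reachable from the initial position $p_0$, then $p$ is reachable from some KK position. The key ingredient is \cref{thm:initial-kk-each-other}, whose name and the sentence preceding \cref{thm:kk-reachable} indicate that the initial position and the KK positions are mutually reachable. In particular, the initial position $p_0$ is reachable from some (indeed every) KK position $q$.

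Fix a KK position $q$ together with a legal move sequence leading from $q$ to $p_0$. Since $p$ is reachable from $p_0$, there is also a legal move sequence from $p_0$ to $p$. Concatenating the two sequences gives a sequence from $q$ through $p_0$ to $p$, so $p$ is reachable from the KK position $q$. This contradicts the hypothesis that $p$ is unreachable from every KK position, and the theorem follows.

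The step I expect to need care is checking that concatenation really yields a legal game under the paper's notion of position, which is the tuple (side to move, board, hand) with no history. Two points arise. First, side to move must match at the junction: both sequences pass through the same position $p_0$, which has Black to move, so this is automatic. Second, history-dependent rules (repetition and perpetual check) might seem to forbid revisiting $p_0$ or other positions. However, we may first shorten each sequence by removing cycles, so that no position repeats within it. For the concatenation, we then either rely on the convention of \cref{chapter:def-state} that reachability is a relation on positions ignoring repetition counts, or note that repetition requires four occurrences, which a concatenation of two cycle-free paths cannot produce (each position appears at most twice).

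The real content lies in \cref{thm:initial-kk-each-other}, specifically the direction ``KK $\Rightarrow$ initial'': from a KK position one must rebuild the full initial setup by drops and King moves while respecting the Two Pawns, no-legal-move, and check rules. Since that lemma is assumed here, the present proof reduces to the composition argument above.
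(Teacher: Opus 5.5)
Your proposal is correct and follows the paper's own proof: both argue the contrapositive by concatenating a KK-to-initial sequence, supplied by \cref{thm:initial-kk-each-other}, with the sequence from the initial position to $p$. Your extra care about repetition is sound but goes beyond the paper, which leaves this implicit: removing cycles keeps each move legal, since legality depends only on the position tuple, and in the concatenation no position occurs more than twice, well short of the four occurrences that Sennichite requires.
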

\begin{proof}
	We prove the contrapositive: ``if $p$ is reachable from the initial position, then $p$ is reachable from a KK position.'' By \cref{thm:initial-kk-each-other}, the initial position is reachable from any KK position. Therefore, if $p$ is reachable from the initial position, there exists a sequence of moves from any KK position to $p$ via the initial position.
\end{proof}

Using KK positions as the goal instead of the initial position relaxes the search target from a single state to a set of states. Furthermore, whereas reaching the initial position requires returning specific pieces to specific squares, KK positions can be reached simply by removing pieces from the board to hand. This is expected to result in search completion with fewer expanded nodes.

Note that the reachability determination method using KK positions can be applied not only to standard Shogi and Mini Shogi but to any game that employs the piece-drop rule. The piece-drop rule is a feature of many Shogi variants, and has also been introduced in non-Shogi games such as Crazyhouse Chess and Bughouse Chess.

\subsubsection{Search from Candidate Positions to KK Positions}
To determine the reachability of a candidate position $p$ from KK positions, we use a reverse search with $p$ as the start node and elements of the set of KK positions as goal nodes. Specifically, using the $\mathrm{Prev}_1$ function defined in \cref{chapter:method-prev}, we verify reachability by Greedy Best-First Search as shown in Algorithm \ref{kk-gbfs}. Greedy Best-First Search \cite{Wilt-Thayer-Ruml-2010} uses a heuristic function $H(p)$ for a position $p$ and explores the state space in non-decreasing order of $H(p)$. We used the heuristic function defined in \cref{hfunc-to-kk}.

\begin{dfn}\label{hfunc-to-kk}
	The following function satisfies $H(p)=0$ when position $p$ is a KK position.
	\begin{equation}
		H(p) = a \cdot N(p) + b \cdot P(p) + c \cdot D(p)
	\end{equation}
	Here, $a, b, c, N(p), P(p), D(p)$ represent the following, respectively.
	\begin{itemize}[itemsep=0pt]
		\item $a,b,c$ : Real-valued parameters.
		\item $N(p)$ : The number of non-King pieces on the board.
		\item $P(p)$ : The number of promoted pieces on the board.
		\item $D(p)$ : The sum of the vertical distances for all promoted pieces on the board from the promotion zone. If there are no promoted pieces, $D(p)=0$. The smaller this value, the easier it is to unpromote a promoted piece, making it easier to reach a KK position.
	\end{itemize}
\end{dfn}

Since Algorithm \ref{kk-gbfs} exhaustively explores the predecessor graph (limited only by available memory), a return of 'Failure' guarantees that the position is unreachable from any KK position. Shinoda\cite{shinoda} presented the position in \figref{shinoda-difficult} as an example where reachability is non-trivial; our algorithm successfully verified its reachability.

\begin{algorithm}[tb]
	\caption{Greedy Best-First Search to KK positions}
	\label{kk-gbfs}
	\begin{algorithmic}[1]
		\Function {can\_reach\_KK}{$p$}
		\State $\mathbf{open} \gets \{p\}$
		\State $\mathbf{closed} \gets \{\}$
		\While{$\mathbf{open} \neq \emptyset$}
		\State $p^* \gets$ $\underset{p \ \in \ \mathbf{open}} {\operatorname{argmin}} \ H(p)$
		\State $\mathbf{open} \gets \mathbf{open} - \{p^*\}$
		\State $\mathbf{closed} \gets \mathbf{closed} \ \cup \ \{p^*\}$
		\If{$H(p^*) = 0$}
		\Return Success
		\EndIf
		\ForAll{$p' \in \mathrm{Prev}_1(p^*)$}
		\If{$p' \not \in (\mathbf{closed} \ \cup \ \mathbf{open})$}
		\State $\mathbf{open} \gets \mathbf{open} \ \cup \ \{p'\}$
		\EndIf
		\EndFor
		\EndWhile
		\Return Failure
		\EndFunction
	\end{algorithmic}
\end{algorithm}

\begin{figure}[tb]
	\centering
	\begin{minipage}[c]{\columnwidth}
		\centering
		\includegraphics[width=0.85\columnwidth]{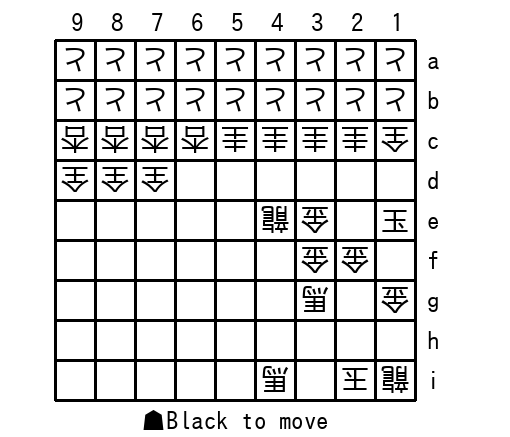}
	\end{minipage}
	\caption{A reachable position that is difficult to prove}
	\label{shinoda-difficult}
\end{figure}

\begin{figure*}[tb]
	\centering
	\includegraphics[width=2\columnwidth]{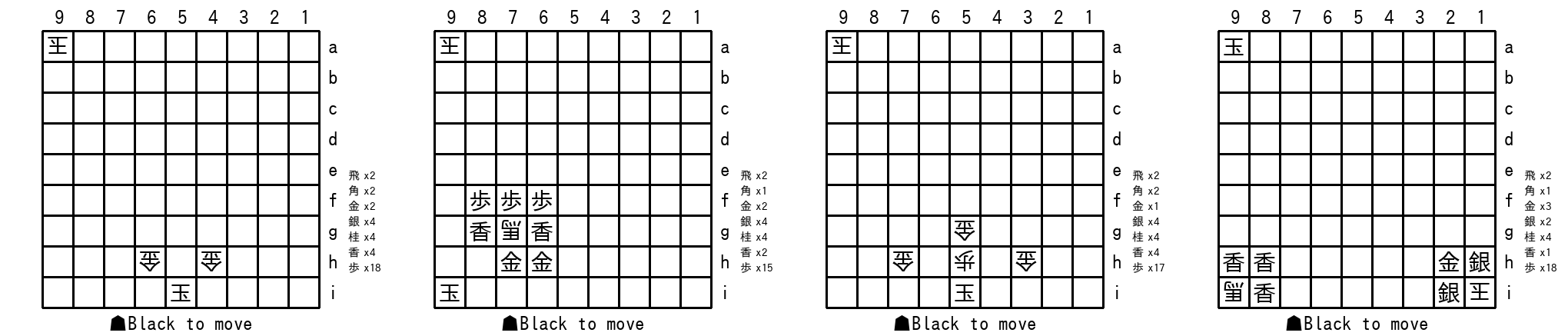}
	\caption{Positions $p$ such that $\mathrm{Prev}_1(p) = \emptyset$}
	\label{shogi-no-prev}
\end{figure*}

Examples of positions $p$ satisfying $\mathrm{Prev}_1(p) = \emptyset$, i.e., positions with no 1-ply predecessor, are shown in \figref{shogi-no-prev}. The details are as follows.

\begin{itemize}[itemsep=0pt]
	\item Positions where Black's King is in check and going back one move would result in an illegal position where White could capture Black's King (1st and 2nd from the left in \figref{shogi-no-prev}).
	      \begin{itemize}
		      \item In the 1st position from the left, the White Golds on 4h and 6h are giving checks to Black's King. Regardless of what White's previous move was, the position one move prior would be an illegal position where White could capture Black's King.
		      \item In the 2nd position from the left, the piece that made the last move is uniquely determined to be White's Horse on 7g (it cannot be a discovered check because there is no piece that could have moved from 8h). Whether the preceding move was B-8h $\rightarrow$ +B-7g or +B-8h $\rightarrow$ +B-7g, the position one move prior would be an illegal position where White could capture Black's King.
	      \end{itemize}
	\item Positions where no 1-ply predecessor exists due to the Drop Pawn Mate prohibition (3rd from the left in \figref{shogi-no-prev}). This position is a checkmate position. The preceding move is uniquely determined to be dropping a Pawn from hand onto 5h, but this move constitutes Drop Pawn Mate (an illegal move). Hence, the position has no 1-ply predecessor.
	\item Positions where no possible origin square exists for any of the opponent's pieces on the board (4th from the left in \figref{shogi-no-prev}). Since this is a Black-to-move position, one of White's pieces on the board (the King on 1i, the Horse on 9i) must have moved on the previous turn. Note that the move could not have been a drop, because all of White's pieces on the board are either the King or promoted pieces, and passing is not a legal move in Shogi. However, neither piece has a vacant square that could serve as its origin.
\end{itemize}

\begin{figure}[tb]
	\centering
	\includegraphics[width=1\columnwidth]{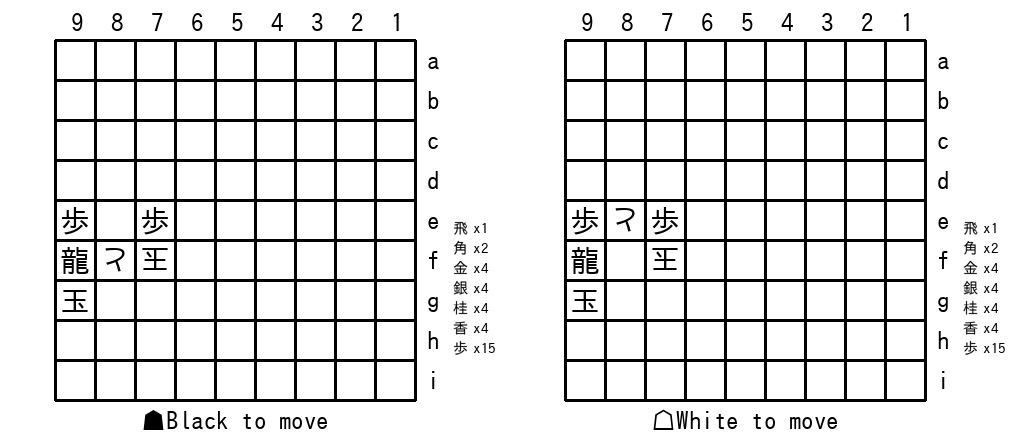}
	\caption{Positions $p$ such that $\mathrm{Prev}_1(p) \neq \emptyset \land \mathrm{Prev}_2(p) = \emptyset$}
	\label{shogi-no-prev2}
\end{figure}

Positions $p$ satisfying $\mathrm{Prev}_1(p) \neq \emptyset \land \mathrm{Prev}_2(p) = \emptyset$ also exist. The left position in \figref{shogi-no-prev2} is such an example. In this figure, the Tokin on 8f is giving a check, thus the only possible position one move prior is the one shown in the right figure of \figref{shogi-no-prev2}. However, in the right position, White's King is in check from the Dragon on 9f, and the only possible origin squares for this Dragon are 8f or 8g. In either case, the Dragon could have captured White's King, which means the check was being ignored, making the right figure an illegal position. Although this example was artificially constructed, among the examples of positions $p$ satisfying $\mathrm{Prev}_{i}(p) \neq \emptyset \land \mathrm{Prev}_{i + 1}(p) = \emptyset \ (i \ge 1)$ discovered in our experiments, many were related to an ignored check as in \figref{shogi-no-prev2}.

\section{Experiments}
We first conducted experiments on Mini Shogi to verify whether the proposed method works correctly. We then extended the experimental code used for Mini Shogi to conduct experiments on standard Shogi. Therefore, in this section, we first report the results for Mini Shogi and then for standard Shogi.

\subsection{Mini Shogi}
We conducted experiments with the sample size $|S|$ set to 100 million. We implemented experimental code in Python, prioritizing code readability and development efficiency\footnote{\url{https://github.com/u-tokyo-gps-tanaka-lab/minishogi-position-ranking}}, and the computation was performed on a machine with 128 GB of RAM and an AMD Ryzen Threadripper 2990WX, using 64 parallel processes for approximately 3 days. The parameters of the heuristic function were set to $a=10, b=10, c=1$. In the reachability determination by searching toward KK positions, no limits were imposed on the number of nodes in $\mathbf{open}$ or $\mathbf{closed}$. For the given positions, no memory overflow occurred, and we obtained success/failure determinations within a practical amount of time. As a result, $14,849,198$ positions were found to be reachable. The number of positions passing each check is shown in \tabref{shogi-result-tbl}.

This yields a $3\sigma$ confidence interval for $p_\mathcal{R}$ of $0.14838 \dots < p_\mathcal{R} < 0.14859 \dots$. Based on this, the estimated number of positions $|\mathcal{R}|$ is $2.376 \dots \times 10^{18} < |\mathcal{R}| < 2.379 \dots \times 10^{18}$. Rounding to three significant digits, the estimated number of Mini Shogi positions is $2.38 \times 10^{18}$. Examples of positions determined to be reachable are shown in \figref{minishogi-reach-ok}.

\begin{figure}[tb]
	\centering
	\begin{minipage}[c]{\columnwidth}
		\centering
		\includegraphics[width=1\columnwidth]{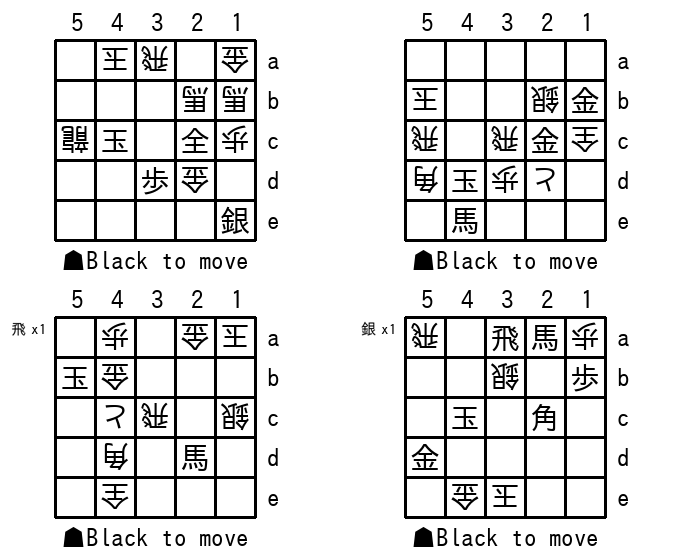}
	\end{minipage}
	\caption{Mini Shogi positions determined to be reachable}
	\label{minishogi-reach-ok}
\end{figure}

\subsection{Shogi}
We conducted experiments with the sample size $|S|$ set to 5 billion. In this experiment, the following two programs were used in combination.

\begin{itemize}[itemsep=0pt]
	\item A Python program developed from scratch\footnote{\url{https://github.com/u-tokyo-gps-tanaka-lab/shogilib/}}. This implements all the checks described in \cref{chapter:shogi-remove-flip}, \cref{chapter:shogi-remove-illegal}, and \cref{chapter:shogi-search-kk}.
	\item A program based on the open-source C++ Shogi engine ``YaneuraOu'' \cite{yaneuraou}\footnote{\url{https://github.com/u-tokyo-gps-tanaka-lab/YaneuraOu/tree/estimate_number_of_positions}}. This implements the ``violations related to check'' from \cref{chapter:shogi-remove-illegal} and the reachability determination described in \cref{chapter:shogi-search-kk}.
\end{itemize}

The canonicalization described in \cref{chapter:shogi-remove-flip} and the ``violations related to piece placement'' in \cref{chapter:shogi-remove-illegal} were performed using the Python program, while the ``violations related to check'' in \cref{chapter:shogi-remove-illegal} and the reachability test in \cref{chapter:shogi-search-kk} were performed using the modified YaneuraOu to reduce computation time. The parameters of the heuristic function were set to $a=10, b=10, c=1$. The computation was performed on a machine with 128 GB of RAM and an AMD Ryzen Threadripper 2990WX. It required approximately 131 hours with 56 parallel processes. Additionally, we conducted a separate experiment with the sample size reduced to 500 million, performing all processes in \cref{chapter:shogi-remove-flip}, \cref{chapter:shogi-remove-illegal}, and \cref{chapter:shogi-search-kk} using only the Python program. The results matched those obtained with the modified YaneuraOu, confirming the correctness of both implementations.

In the search toward KK positions, no limits were imposed on the number of nodes in $\mathbf{open}$ or $\mathbf{closed}$. For the given positions, no memory overflow occurred, and success/failure determination results were obtained within a practical amount of time. As a result, out of 5 billion samples, 40,491,613 positions were found to be reachable. The number of positions passing each check is shown in \tabref{shogi-result-tbl}.

\begin{table}[tb]
	\caption{Number of samples passing each check}
	\label{shogi-result-tbl}
	\hbox to\hsize{\hfil
		\begin{tabular}{c||c|c}
			\hline
			Check                & Shogi         & Mini Shogi  \\
			\hline
			\hline
			Initial generation   & 5,000,000,000 & 100,000,000 \\
			Horizontal mirroring & 4,945,063,843 & 96,774,076  \\
			Pawn placement       & 187,220,063   & 77,795,825  \\
			Opponent King check  & 58,981,117    & 21,506,911  \\
			Reachability         & 40,491,613    & 14,849,198  \\
			\hline
		\end{tabular}
		\hfil}
\end{table}

From this result, the sample proportion of the reachable position set $\mathcal{R}$ is
\begin{equation}
	\hat{p} = \frac{40,491,613}{5 \times 10^9} = 0.0080983226
\end{equation}
Therefore, the $3 \sigma$ confidence interval for the population proportion $p_\mathcal{R}$ is estimated as $8.09452 \dots \times 10^{-3} < p_\mathcal{R} < 8.10212 \dots \times 10^{-3}$. Hence, the number of positions $|\mathcal{R}| = p_\mathcal{R} \cdot |\mathcal{P}_K|$ is $(6.5506 \pm 0.0033) \times 10^{68}$. Examples of positions determined to be reachable are shown in \figref{shogi-reach-ok}.

\begin{figure}[tb]
	\centering
	\begin{minipage}[c]{\columnwidth}
		\centering
		\includegraphics[width=1\columnwidth]{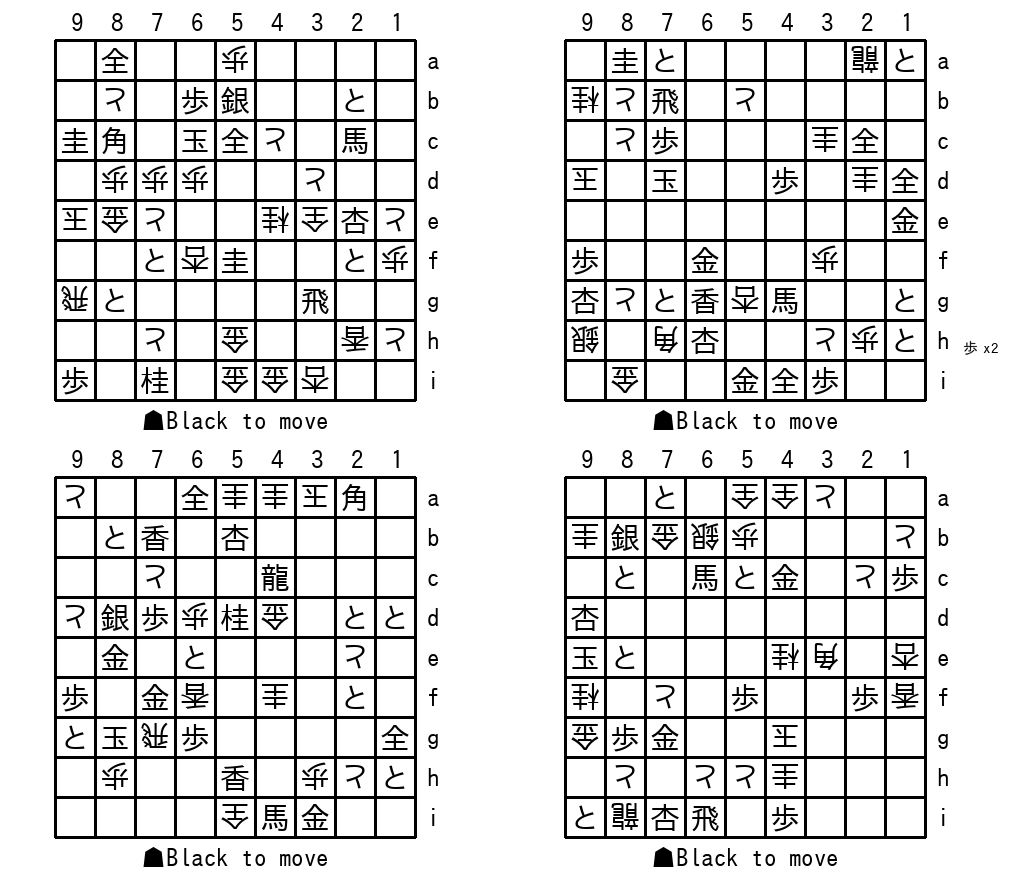}
	\end{minipage}
	\caption{Shogi positions determined to be reachable}
	\label{shogi-reach-ok}
\end{figure}

Note that the results of this experiment also allow us to compute the state-space complexity under the position definition used in previous studies\cite{shinoda, miyako} without any additional experiments. It is approximately $1.31 \times 10^{69}$.

\subsection{Discussion of Experimental Results}
For positions determined to be unreachable by the search toward KK positions, the distribution of the maximum number of plies that could be traced back through reachable positions is shown in \tabref{count-cant-reach}. The position traced back the furthest (8 plies) during the Mini Shogi experiment is shown in \figref{minishogi-long-noreach}. Furthermore, one example of a position that could be traced back the furthest (3 plies), discovered in the Shogi experiment, is shown in \figref{shogi-long-noreach}.

\begin{figure}[tb]
	\centering
	\begin{minipage}[c]{\columnwidth}
		\centering
		\includegraphics[width=1\columnwidth]{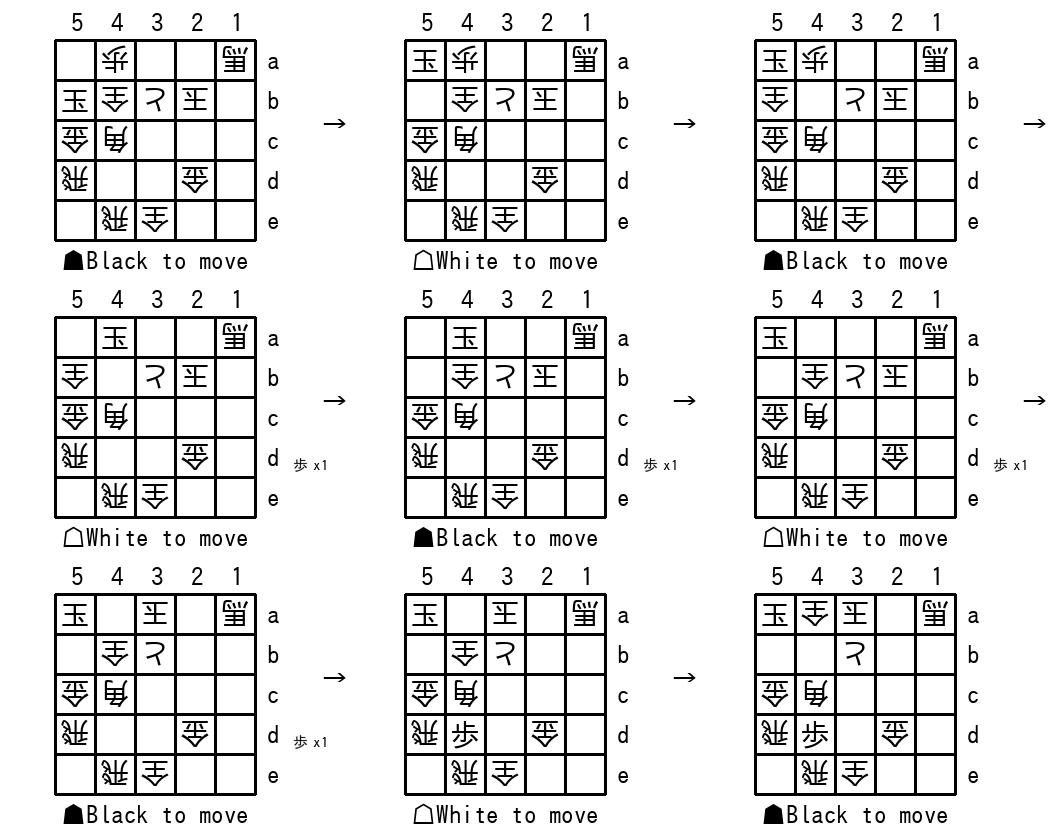}
	\end{minipage}
	\caption{A Mini Shogi position where 8 plies could be traced back. Arrows indicate the move sequence in the actual game.}
	\label{minishogi-long-noreach}
\end{figure}

\begin{figure}[tb]
	\centering
	\begin{minipage}[c]{\columnwidth}
		\centering
		\includegraphics[width=1\columnwidth]{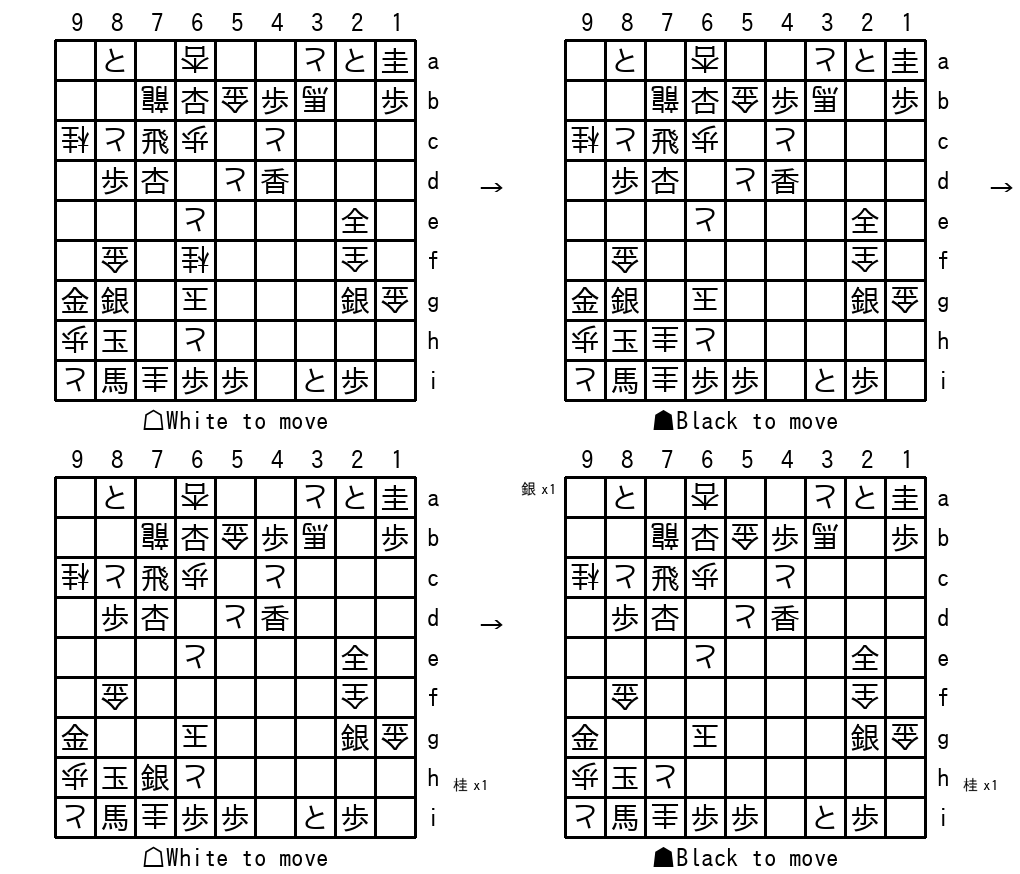}
	\end{minipage}
	\caption{A Shogi position where 3 plies could be traced back. Arrows indicate the move sequence in the actual game.}
	\label{shogi-long-noreach}
\end{figure}

\begin{table}[tb]
	\caption{Number of positions determined unreachable by reverse search and the number of plies traced back}
	\label{count-cant-reach}
	\hbox to\hsize{\hfil
		\begin{tabular}{c||c|c}
			\hline
			Max plies traced back & Shogi      & Mini Shogi \\
			\hline
			0                     & 18,488,938 & 6,650,818  \\
			1                     & 480        & 4,175      \\
			2                     & 83         & 2,494      \\
			3                     & 3          & 114        \\
			4                     & 0          & 104        \\
			5                     & 0          & 2          \\
			6                     & 0          & 4          \\
			7                     & 0          & 1          \\
			8                     & 0          & 1          \\
			9                     & 0          & 0          \\
			\hline
		\end{tabular}
		\hfil}
\end{table}

We conjecture that unreachable positions traceable back further than those discovered here may exist; however, determining the maximum traceback depth for such positions remains an open problem. The property that this part of the search does not diverge is important for the viability of our method. In the case of standard Shogi and Mini Shogi, we believe the following properties hold: ``reverse search from unreachable positions reaches a dead end within a small number of plies'' and ``reverse search from reachable positions proceeds in the steepest descent direction of the heuristic function $H$ with little backtracking.''

In addition, for 10,000 positions randomly sampled from the reachable positions discovered in the Shogi experiment, the distribution of the solution path length found by Greedy Best-First Search and the number of nodes expanded to find the solution are shown in \figref{shogi-h-scatter}. The number of expanded nodes appears to grow linearly with respect to the solution path length, indicating that the search reaches the solution with high efficiency.

\begin{figure}[tb]
	\centering
	\begin{minipage}[c]{\columnwidth}
		\centering
		\includegraphics[width=0.85\columnwidth]{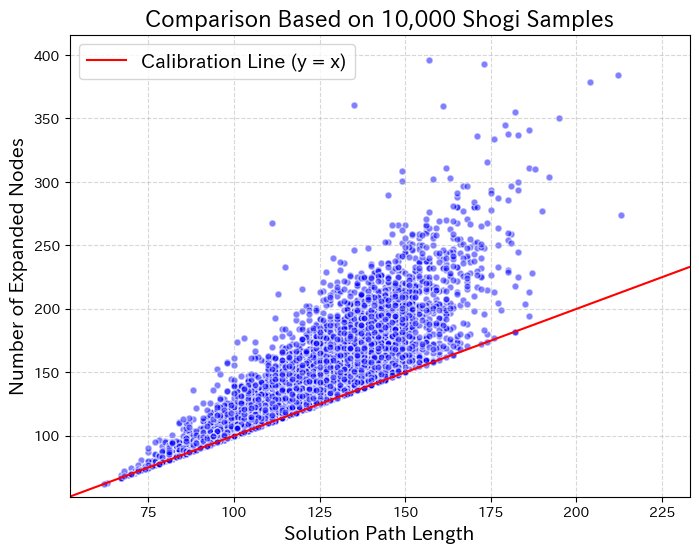}
	\end{minipage}
	\caption{Comparison of solution path length and number of expanded nodes (Shogi)}
	\label{shogi-h-scatter}
\end{figure}

Note that among the generated positions, not a single position was found where $\mathrm{Prev}_1(p) = \emptyset$ due to the Drop Pawn Mate prohibition or where stalemate occurred. This suggests that the choice of whether to treat Drop Pawn Stalemate as Drop Pawn Mate does not significantly affect the estimate.

\section{Conclusion}
We estimated the number of legal positions in Shogi by generating candidate positions uniformly at random and then determining the proportion that are reachable from the initial position. As a result, we estimated the number of Shogi positions to be $6.55 \times 10^{68}$ to three significant digits. The proposed method can be applied not only to Shogi but to any game for which one can define a ``set of positions mutually reachable from the initial position'' (corresponding to KK positions) and an appropriate heuristic function $H$, provided that the reverse search terminates within practical time bounds. For future work, it would be interesting to investigate how the number of positions changes in other Shogi variants or board games.

\begin{acknowledgment}
	This work was supported by JSPS KAKENHI Grant Number JP24K15244.
\end{acknowledgment}

\bibliographystyle{unsrt}
\bibliography{en}

\appendix

\section{Mutual Reachability Between the Initial Position and KK Positions}\label{chapter:kk-reachable}
In this section, we prove that any KK position and the initial position are mutually reachable. For the $(x,y)$ coordinates of the Shogi board, we use the coordinate definition from \figref{board-coordinates}. In the following, we write Black's pieces in hand as $h$.

\begin{dfn}
	We define the following sets of KK positions.
	\begin{itemize}[itemsep=0pt]
		\item $BT_{k, h}$ : The set of KK positions with Black's pieces in hand $h$ such that $1 \leq k \leq 7$, $y_{K_\mathrm{W}} < k$ and $y_{K_\mathrm{B}} > k$.
		\item $TB_{k, h}$ : The set of KK positions with Black's pieces in hand $h$ such that $1 \leq k \leq 7$, $y_{K_\mathrm{B}} < k$ and $y_{K_\mathrm{W}} > k$.
		\item $LR_{k, h}$ : The set of KK positions with Black's pieces in hand $h$ such that $1 \leq k \leq 7$, $x_{K_\mathrm{B}} < k$ and $x_{K_\mathrm{W}} > k$.
		\item $RL_{k, h}$ : The set of KK positions with Black's pieces in hand $h$ such that $1 \leq k \leq 7$, $x_{K_\mathrm{W}} < k$ and $x_{K_\mathrm{B}} > k$.
	\end{itemize}
\end{dfn}

By the definition of KK positions, the Black King and White King are separated by at least 2 in either the $x$ or $y$ coordinate. Therefore, every KK position belongs to at least one of the above sets (some positions belong to two or more sets simultaneously).

In the above four sets, we call the region in which the Kings can move while satisfying the set membership conditions the ``movable region.'' For example, in the set $BT_{k, h}$, the movable region for Black's King is defined by $0 \leq x \leq 8, k < y \leq 8$, and the one for White's King is defined by $0 \leq x \leq 8, 0 \leq y < k$.

As long as the Kings move within the movable region, neither ignoring a check nor a suicide move can occur. In the following, we call the extent of the movable region in the $y$ direction (the $x$ direction for $LR_{k, h}$ and $RL_{k, h}$) the ``depth.''

\begin{lemma}
	From any KK position $(t, \bm{b}, \bm{h})$, the KK position $(\neg t, \bm{b}, \bm{h})$ with the same board and pieces in hand but with the side to move swapped is reachable.
\end{lemma}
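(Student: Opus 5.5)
The idea is to change the parity of the side to move by making a closed walk of odd length in the King-move graph, while keeping both Kings inside a movable region. Without loss of generality I take $t=\mathrm{B}$; the case $t=\mathrm{W}$ is symmetric. By the remark after the definition of $BT_{k,h}$, $TB_{k,h}$, $LR_{k,h}$ and $RL_{k,h}$, the position belongs to at least one of these sets. I fix such a set and its movable regions. As long as every move is a King move staying inside its region, no check, suicide move or ignored check can occur. The board then always contains only the two Kings, separated by at least two squares, and the hands $\bm h$ never change. So every intermediate position is a legal KK position with the same hands.

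The key observation is that a King's single step graph contains triangles. From $(x,y)$ the King can go to $(x+1,y)$, then to $(x+1,y\pm1)$ or $(x,y\pm1)$, and back diagonally or orthogonally. The triangle $(x,y)\to(x+1,y)\to(x,y+1)\to(x,y)$ uses one orthogonal step, one diagonal step and one orthogonal step, and returns in $3$ plies of that King's own moves. The opponent meanwhile moves his King back and forth between two squares, $a\to a'\to a$ or similar.

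Concretely, with Black to move I plan the following sequence. Black goes around a triangle of length $3$, and White alternates $a\to a'\to a\to a'$ between two adjacent squares. The plies are B, W, B, W, B, W, in total $3$ Black moves and $3$ White moves. Black is then back home, but White sits at $a'$. To restore White's square I let White perform a round trip of even length, and I need a total count giving an odd number of plies overall.

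The better combination is that both Kings make closed walks, Black of length $m$ and White of length $n$, with the moves alternating starting from Black. After Black's $m$-th move, White has made either $m$ moves (ending with White's move, so Black is to move again) or $m-1$ moves (so White is to move). For White to be to move we need $n=m-1$. So I take Black's walk of length $3$ (a triangle) and White's walk of length $2$ (out and back). The sequence is B, W, B, W, B: five plies, both Kings back on their original squares, and White to move, which is $(\mathrm W,\bm b,\bm h)$. If instead $t=\mathrm{W}$, the roles are swapped: White does the triangle and Black the out-and-back.

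The main obstacle is ensuring room. Black's triangle needs a $2\times2$ block inside its movable region containing its King, and White's round trip needs one neighboring square inside its region. The block is also required to keep the Kings at separation at least $2$, but that is automatic since the regions are separated by the line $y=k$ (or $x=k$). A region may have depth $1$, for example $y_{K_\mathrm W}<k$ with $k=1$, which gives only row $0$. A width-$9$ row still allows a back-and-forth step, which is all White needs for the round trip. Black's triangle, however, needs depth at least $2$. So I must choose the set so that the King performing the triangle has depth at least $2$. Since the two depths sum to $8$ (for example $k$ and $8-k$), at least one of them is at least $2$; when only the side not to move has depth $\ge2$, I would choose $k$ differently. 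Here I would re-pick $k$ among the valid values, which range over $y_{K_\mathrm W}<k<y_{K_\mathrm B}$, taking $k$ as close to the White King as possible, so that Black gets depth $8-k\ge2$ whenever $k\le6$. The remaining corner cases, where the Kings are at the extreme rows or columns, I would check directly, for instance by switching to the $LR$/$RL$ sets using horizontal separation.
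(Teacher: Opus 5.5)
Your main case matches the paper's: when the side to move has a movable region of depth at least $2$, it runs a $3$-cycle while the opponent runs a $2$-cycle, for five plies. The gap is the residual case you defer to ``check directly, for instance by switching to the $LR$/$RL$ sets''. That switch does not always exist, and your five-ply pattern can fail outright.

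Take Black to move with Black's King on $(4,8)$ and White's King on $(4,6)$. This position lies only in $BT_{7,h}$: there is no $TB$ set since $y_{K_\mathrm{B}}=8$, and no $LR$/$RL$ set since the files coincide. In $BT_{7,h}$ Black's region is the single rank $y=8$, which contains no triangle. Even if you ignore regions, the pattern B, W, B, W, B with a Black triangle and a White out-and-back is impossible here:
\begin{enumerate}
\item At ply 1 White's King is still on $(4,6)$, and every rank-$7$ neighbour of $(4,8)$ is adjacent to $(4,6)$. So Black's first step must stay on rank $8$.
\item Black's second step must then go to a common neighbour of $(4,8)$ and that rank-$8$ square, and all such squares lie on rank $7$.
\item White's return to $(4,6)$ at ply 4 is therefore a suicide move.
\end{enumerate}
More generally, suppose the mover's King is on its back rank, the opponent's King is two ranks away, and their files differ by at most $1$ (or the colour- and axis-symmetric configurations hold). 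Then no set gives the mover depth at least $2$, so re-selecting $k$ cannot rescue the argument.

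The missing idea is that the odd cycle need not be made by the side to move. Your own parity condition $n=m-1$ is equally met by $m=4$, $n=3$. Let the side not to move run the $3$-cycle in its own region, which is deep whenever the mover's is shallow since the depths sum to $8$. Meanwhile the side to move makes two back-and-forth $2$-cycles along its rank, for seven plies in all. This is exactly what the paper does for $k=7$, and it removes the need to re-pick $k$ at all. In $BT_{k,h}$ with $t=\mathrm{B}$, use the mover's $3$-cycle when $1\le k\le 6$ and the opponent's when $k=7$. Argue symmetrically for the other sets and for $t=\mathrm{W}$.
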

\begin{proof}
	First, consider the case where the KK position $(t, \bm{b}, \bm{h})$ has $t = \mathrm{B}$ (Black to move) and belongs to $BT_{k, h}$. We construct explicit move sequences for two cases.
	\begin{itemize}
		\item $1 \leq k \leq 6$: The depth of Black's movable region is at least 2. Therefore, we can construct a move sequence (hereafter $C_3$) in which Black returns to the original square in 3 moves within the movable region, as shown in the left of \figref{cycle_2_3}. Similarly, we can construct a move sequence (hereafter $C_2$) in which White returns to the original square in 2 moves. By having Black play $C_3$ and White play $C_2$, the side to move switches to White.
		\item $k = 7$: The depth of White's movable region is at least 2. Therefore, we can construct a $C_3$ move sequence for White, as shown in the right of \figref{cycle_2_3}. By concatenating one $C_3$ for White and two $C_2$'s for Black, the side to move switches to White.
	\end{itemize}
	By symmetry, the side to move can be swapped similarly for $t = \mathrm{W}$ and for $TB_{k, h}$, $LR_{k, h}$, and $RL_{k, h}$.
\end{proof}

\begin{figure}[tb]
	\centering
	\begin{minipage}[c]{\columnwidth}
		\centering
		\includegraphics[width=1\columnwidth]{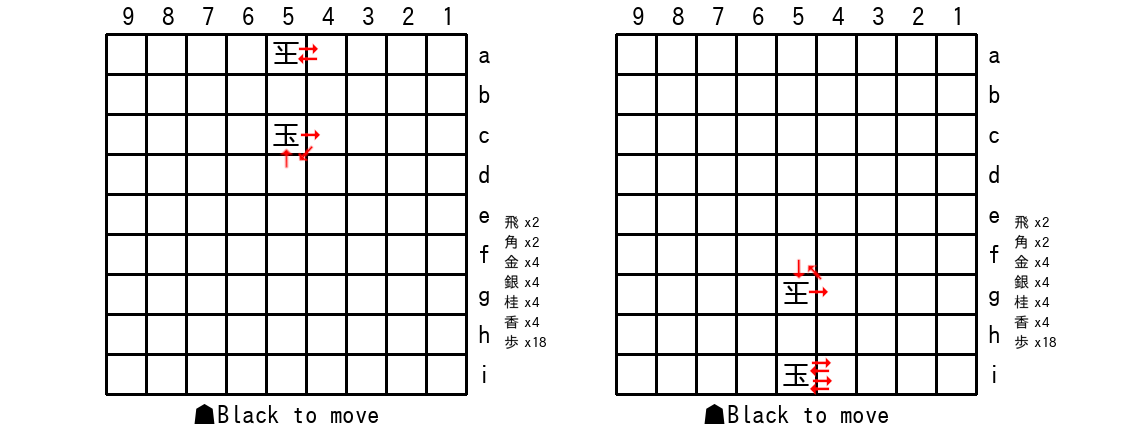}
	\end{minipage}
	\caption{Move sequences for changing the side to move}
	\label{cycle_2_3}
\end{figure}

\begin{figure*}[tb]
	\centering
	\includegraphics[width=2\columnwidth]{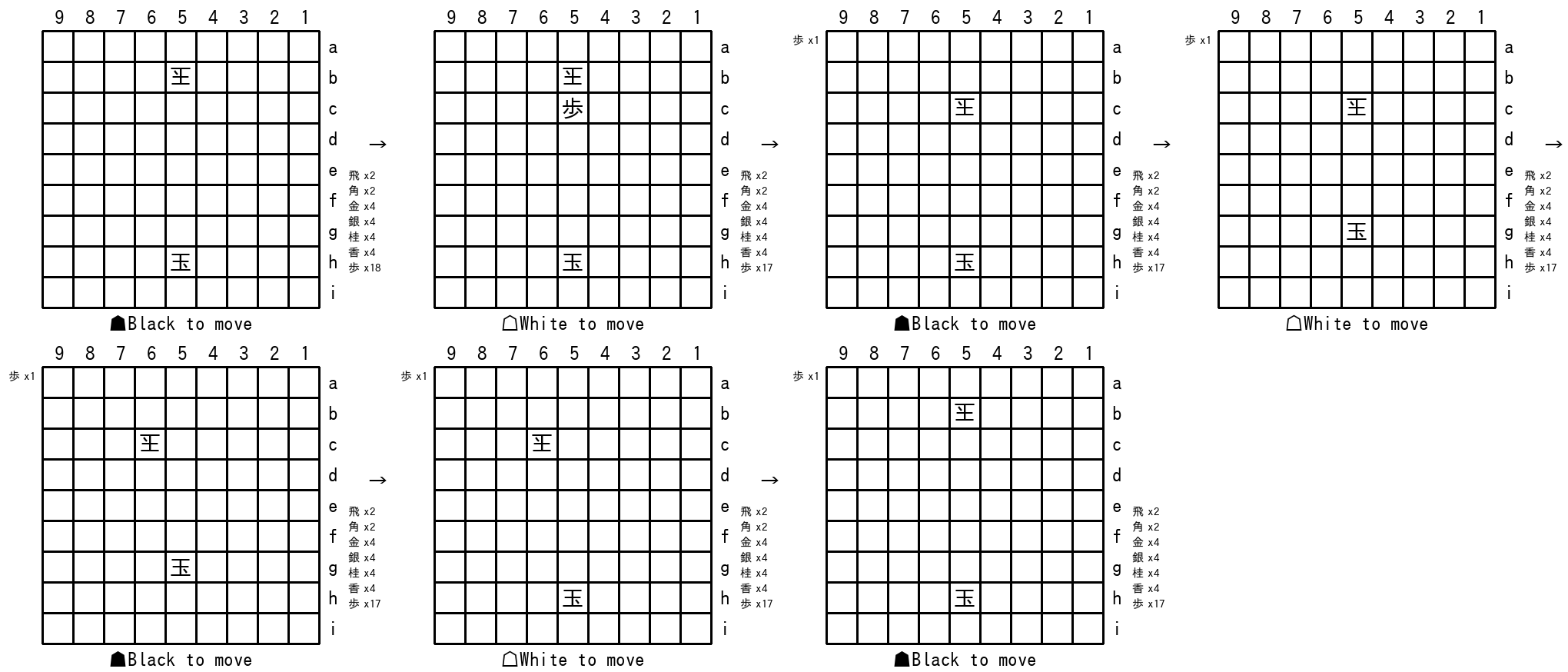}
	\caption{An example of transferring a piece in hand from Black to White in a KK position}
	\label{pawn_to_player2}
\end{figure*}

\begin{lemma}\label{lemma:in_area}
	For any two distinct KK positions $( t, (p_1 , p_2), \bm{h})$, $( t', (p'_1 , p'_2), \bm{h}) \in BT_{k, h}$, $( t, (p_1 , p_2), \bm{h})$ can reach $( t', (p'_1 , p'_2), \bm{h} )$.
\end{lemma}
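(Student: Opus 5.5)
The plan is to build explicit king-only move sequences that stay inside the movable region of $BT_{k,h}$. Black's King stays in $\{0\le x\le 8,\ k<y\le 8\}$ and White's King stays in $\{0\le x\le 8,\ 0\le y<k\}$. Rows $y<k$ and $y>k$ are separated by row $k$, so the Kings always differ by at least $2$ in the $y$ direction. Hence every intermediate position is a KK position: no check, suicide move or ignored check can occur, and every king step is legal. Pieces in hand are never touched, so $\bm{h}$ is preserved automatically. It remains to control the two king squares and the side to move.

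\textbf{Step 1: connectivity.} Each movable region is a nonempty grid rectangle: $9$ columns by $8-k\ge 1$ rows for Black, and $9$ columns by $k\ge1$ rows for White. A rectangle is connected under king steps, and it has at least $9$ squares, so each King has a second square to step to.

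\textbf{Step 2: move Black's King.} With Black to move, let Black walk along a king path from $p_1$ to $p'_1$ inside its region. Between Black's steps, White makes shuffle moves. White's King steps to a neighbouring square of its region and back. If the number of White moves needed is odd, White makes one extra neighbouring step and returns later, or absorbs the parity using Step 3. Here I use the convention that when a King already stands at its target it is moved away and back. The simpler framing is that each King moves independently, since the other King's region is unaffected.

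\textbf{Step 3: move White's King and fix the turn.} Symmetrically, walk White's King from $p_2$ to $p'_2$, with Black inserting out-and-back shuffles in its own region. After both Kings are on $(p'_1,p'_2)$ the side to move is either $t'$ or $\neg t'$. In the latter case, apply the preceding lemma (side-to-move swap) to this KK position, which keeps the board and hand fixed and flips the turn. Its proof also uses only king moves within the movable regions of the same $BT_{k,h}$, via the $C_2/C_3$ cycles, so we remain in the set, although membership is not actually required for the conclusion.

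\textbf{Main obstacle: parity bookkeeping.} The two Kings must alternate moves, and each King's total number of moves has a parity fixed by its path plus any returning cycles. Out-and-back shuffles contribute an even number of moves, and the mismatch is resolved by a final call to the swap lemma. That lemma already handles the delicate degenerate depths, $k=7$ for Black's one-row region and $k=1$ for White's, where a $3$-cycle is impossible for that King. So I would argue the plan is as follows: pad the shorter walk with 2-move shuffles so the two walks have equal length, or lengths differing by one, consistent with alternation starting from $t$. Then invoke the swap lemma if the final turn is wrong. The 2-move shuffles need only one neighbouring square, which Step 1 provides.
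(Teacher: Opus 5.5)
Your proposal is correct and follows essentially the paper's argument. You walk each King to its target inside its movable region of $BT_{k,h}$, where the $y$-gap of at least $2$ keeps every intermediate position a legal KK position. You pad the shorter walk with 2-move out-and-back cycles $C_2$ so the lengths fit the alternation of turns, then finish with the side-to-move swap lemma if the turn is wrong. The paper differs only cosmetically: it first normalizes both endpoints to Black-to-move via the swap lemma and uses shortest (Chebyshev-distance) paths. Your parity condition (equal lengths, or lengths differing by one as dictated by who moves first) is stated more carefully than the paper's odd case, whose wording ``Black's sequence length exactly $1$ less than White's'' has the inequality reversed for a Black-first sequence.
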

\begin{proof}
	We show the reachability from $(\mathrm{B}, (p_1 , p_2), \bm{h})$ to $(\mathrm{B}, (p'_1 , p'_2), \bm{h})$. Once this is shown, by the previous lemma, the reachability from $( t, (p_1 , p_2), \bm{h} )$ to $( t', (p'_1 , p'_2), \bm{h} )$ also follows.

	Let $d ( c , c' )$ denote the number of moves required to move from square $c$ to square $c'$ along the shortest path within the movable region. Then the following holds.
	\begin{equation}
		d (c,c') = \max(|c_x - c'_x|, |c_y - c'_y|)
	\end{equation}
	When the difference in shortest move sequence lengths $|d ( p_1 , p'_1 ) - d ( p_2 , p'_2 )|$ is even, appending $C_2$ move sequences to the shorter player's path (half the difference in count), we equalize the two path lengths and it yields $(\mathrm{B}, (p'_1 , p'_2), \bm{h})$.

	When the difference $|d ( p_1 , p'_1 ) - d ( p_2 , p'_2 )|$ is odd, adding an appropriate number of $C_2$ move sequences to the shorter player's sequence makes Black's sequence length exactly $1$ less than White's. Applying this sequence yields $(\mathrm{W}, (p'_1 , p'_2), \bm{h})$, and by the previous lemma, reachability to $(\mathrm{W}, (p'_1 , p'_2), \bm{h})$ also holds.
\end{proof}

\begin{lemma}\label{lemma:between_area1}
	For any $k$ and $k'$, $s \in BT_{k, h}$ and $s' \in LR_{k', h}$ are mutually reachable. Similarly, mutual reachability holds between $BT_{k, h}$ and $RL_{k', h}$, between $TB_{k, h}$ and $LR_{k', h}$, and between $TB_{k, h}$ and $RL_{k', h}$.
\end{lemma}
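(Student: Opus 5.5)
The plan is to route any position of $BT_{k,h}$ and any position of $LR_{k',h}$ through a common ``corner'' configuration lying in both sets. Then Lemma~\ref{lemma:in_area} handles the movement inside each set, and reachability in both directions follows by transitivity.

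Concretely, take the configuration with White's King at $(8,0)$ and Black's King at $(0,8)$. Its $x$-separation and $y$-separation are both $8$. So for every $1\le k\le 7$ it satisfies $y_{K_\mathrm{W}}=0<k<8=y_{K_\mathrm{B}}$ and lies in $BT_{k,h}$. For every $1\le k'\le 7$ it satisfies $x_{K_\mathrm{B}}=0<k'<8=x_{K_\mathrm{W}}$ and lies in $LR_{k',h}$. Call this position $q$ (with either side to move; by the side-swap lemma the choice does not matter). Since $q$ is a KK position with the same hand $h$:
\begin{itemize}
\item given $s\in BT_{k,h}$, Lemma~\ref{lemma:in_area} applied inside $BT_{k,h}$ gives $s\to q$ and $q\to s$;
\item given $s'\in LR_{k',h}$, Lemma~\ref{lemma:in_area} applied inside $LR_{k',h}$ (valid by the symmetry remark in its proof) gives $q\to s'$ and $s'\to q$.
\end{itemize}
Concatenating these sequences yields $s\to q\to s'$ and $s'\to q\to s$.

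The other three pairs work the same way with a suitable corner pair:
\begin{itemize}
\item $BT_{k,h}$ and $RL_{k',h}$: White at $(0,0)$, Black at $(8,8)$;
\item $TB_{k,h}$ and $LR_{k',h}$: Black at $(0,0)$, White at $(8,8)$;
\item $TB_{k,h}$ and $RL_{k',h}$: Black at $(8,0)$, White at $(0,8)$.
\end{itemize}
In each case, check that the two Kings sit on the extreme rows and extreme files in the orientation both sets require. The separation of $8\ge 2$ makes each of these a genuine KK position.

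The main step to verify is that Lemma~\ref{lemma:in_area} really transfers to the $LR$, $RL$ and $TB$ families. That lemma is stated only for $BT_{k,h}$, but it rests only on the cycles $C_2$ and $C_3$ and on the Chebyshev-distance path argument inside the movable region. These ingredients are invariant under swapping the roles of $x$ and $y$ and under reflections, and every movable region is a full rectangle of width $9$ and depth at least $1$. So the same construction applies; I would state this reduction explicitly.

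Since the hands are fixed throughout, no captures or drops occur, and all moves are King moves within movable regions. Therefore no illegal check situation arises, which completes the argument.
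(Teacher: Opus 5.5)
Your proof is correct and is essentially the paper's argument: exhibit one KK position that lies in both sets, route through it, and use Lemma~\ref{lemma:in_area} (and its symmetric versions) inside each set. The only difference is the common position: the paper uses a $(k,k')$-dependent one, with Black's King at $(k'-1,k+1)$ and White's King at $(k'+1,k-1)$, while your corner positions work uniformly for all $k,k'$; you also state explicitly the symmetry transfer of Lemma~\ref{lemma:in_area} to the $TB$, $LR$, $RL$ families, which the paper leaves implicit.
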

\begin{proof}
	For any $1 \leq k \leq 7$ and $1 \leq k' \leq 7$, $BT_{k, h}$ and $LR_{k', h}$ contain a KK position that belongs to both: $(\mathrm{B}, ((k' - 1 , k + 1) , (k' + 1 , k - 1)), \bm{h})$. Therefore, by going through this position, any $BT_{k, h}$ position and any $LR_{k', h}$ position are mutually reachable.
\end{proof}

\begin{lemma}\label{lemma:between_area2}
	KK positions $(t, \bm{b}, \bm{h})$ and $(t', \bm{b'}, \bm{h})$ with the same pieces in hand are mutually reachable.
\end{lemma}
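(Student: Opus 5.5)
Every KK position with pieces in hand $\bm{h}$ lies in at least one of the four families $BT_{k,h}$, $TB_{k,h}$, $LR_{k,h}$, $RL_{k,h}$ for some $1 \le k \le 7$, as noted after the definition. The plan is to build a connectivity graph on these $28$ families, indexed by (type, $k$). Two families are joined when all their members are mutually reachable. By \cref{lemma:in_area} and its symmetric analogues for the other three types, every member of a single family is reachable from every other member of that family, regardless of side to move. So it suffices to show that this graph of families is connected: mutual reachability is transitive, hence any two KK positions with the same $\bm{h}$ are then mutually reachable.

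\Cref{lemma:between_area1} joins every ``vertical'' family ($BT_{k,h}$ or $TB_{k,h}$) to every ``horizontal'' family ($LR_{k',h}$ or $RL_{k',h}$), for all $k, k'$. The family graph therefore contains the complete bipartite graph between the $14$ vertical and the $14$ horizontal families. This graph is connected, since both sides are nonempty. Concretely, take $s \in BT_{k,h}$ and $s' \in TB_{k'',h}$. Both are mutually reachable with any fixed position of $LR_{1,h}$, and so with each other. Two horizontal families are linked the same way through $BT_{1,h}$.

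To finish, take KK positions $(t,\bm{b},\bm{h})$ and $(t',\bm{b}',\bm{h})$. Pick families $F \ni (t,\bm{b},\bm{h})$ and $F' \ni (t',\bm{b}',\bm{h})$. If $F = F'$, apply \cref{lemma:in_area}, together with the side-to-move lemma in case the boards coincide. Otherwise, chain through at most one intermediate family using \cref{lemma:between_area1}.

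The main point needing care is that \cref{lemma:in_area} was stated only for $BT_{k,h}$. I would justify its analogues for $TB$, $LR$ and $RL$ by the $180^\circ$ rotation and $90^\circ$ reflection symmetry of King moves on an empty board. The $C_2$ and $C_3$ cycles and the distance formula $d(c,c') = \max(|c_x - c'_x|, |c_y - c'_y|)$ are invariant under these maps, and the pieces in hand never interact with King moves.

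I would also check that the intersection witness in \cref{lemma:between_area1} uses $BT_{k,h}$ paired with $LR_{k',h}$, and that analogous witnesses exist for the other three pairings, which that lemma already asserts. Beyond this there is no real obstacle: the remaining argument is just transitivity of reachability.
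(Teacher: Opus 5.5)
Your proof is correct and follows the paper's argument: the paper likewise reduces to pairs of same-orientation families (e.g.\ $BT_{k,h}$ and $TB_{k',h}$) and connects them through a common family of the other orientation, such as $LR_{k'',h}$, using Lemma~\ref{lemma:between_area1}. The only difference is cosmetic: you handle the same-family case directly via Lemma~\ref{lemma:in_area} and its symmetric analogues, whereas the paper routes even those pairs through an intermediate family (which already relies implicitly on the within-family reachability you state explicitly).
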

\begin{proof}
	It suffices to show mutual reachability for the following pairs.
	\begin{itemize}[itemsep=0pt]
		\item A $BT_{k, h}$ position and a $TB_{k', h}$ position
		\item A $BT_{k, h}$ position and a $BT_{k', h}$ position
		\item A $TB_{k, h}$ position and a $TB_{k', h}$ position
		\item An $LR_{k, h}$ position and an $RL_{k', h}$ position
		\item An $LR_{k, h}$ position and an $LR_{k', h}$ position
		\item An $RL_{k, h}$ position and an $RL_{k', h}$ position
	\end{itemize}
	To show mutual reachability between any $BT_{k, h}$ position and $TB_{k', h}$ position, it suffices to note that for some $1 \leq k'' \leq 7$, any $BT_{k, h}$ position and $LR_{k'', h}$ position are mutually reachable, and any $TB_{k', h}$ position and $LR_{k'', h}$ position are mutually reachable. The other pairs are similarly mutually reachable, establishing that all KK positions with the same pieces in hand are mutually reachable.
\end{proof}

\begin{lemma}\label{lemma:kk5258}
	All KK positions with Black's King on 5h and White's King on 5b are mutually reachable.
\end{lemma}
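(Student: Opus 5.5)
The plan is to reduce the statement to elementary ``hand transfers'' and let \cref{lemma:between_area2} absorb everything else. Any two KK positions with Black's King on 5h (coordinates $(4,7)$) and White's King on 5b (coordinates $(4,1)$) differ only in the side to move and in how the 38 non-King pieces are split between the two hands, since all of them are in hand. Any two such splits are connected by a finite sequence of single-piece transfers, each moving one piece of some basic type from Black's hand to White's hand or the reverse. It therefore suffices to show the following: from such a position, we can reach the position with the same Kings and side to move in which exactly one chosen piece has changed hands. The side to move is handled separately by the side-swapping lemma.

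\textbf{Black to White transfer.} Suppose Black holds a piece of type $i$.
\begin{enumerate}
\item By \cref{lemma:between_area2} (or the side-swapping lemma), first reach the same configuration with Black to move.
\item Black drops the piece on 5c, i.e.\ $(4,2)$, directly in front of White's King.
\item White's King captures it, moving $(4,1)\to(4,2)$.
\end{enumerate}
The resulting position has only the two Kings on the board, at $(4,2)$ and $(4,7)$. They are five ranks apart, so it is a KK position. Its hands differ from the original by exactly the transferred piece, and it is White to move... rather, Black to move after White's capture. \Cref{lemma:between_area2} applied with these new hands then returns the Kings to 5h/5b with whichever side to move is desired.

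The reverse transfer is symmetric. With White to move, White drops on 5g, i.e.\ $(4,6)$, and Black's King on $(4,7)$ captures it.

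\textbf{Main obstacle: legality of the drop and the capture.} I expect the bulk of the work to be checking that both moves are legal for every piece type.
\begin{itemize}
\item \emph{Two Pawns.} This cannot be violated, because no Pawn is on the board.
\item \emph{Piece with no legal moves.} For Black this forbids Pawn and Lance drops on rank $y=0$ and Knight drops on $y\le 1$. Rank $y=2$ is allowed for all of them, and the mirrored statement holds for White at $y=6$.
\item \emph{Drop Pawn Mate.} The dropped Pawn does give check, but it is not mate. The capturing square is attacked only by the dropped piece itself, since the opposing King is five ranks away. Hence the King can legally capture it, and the check is evaded.
\item \emph{Suicide move by the capturing King.} This is impossible, since after the capture the only enemy piece on the board is the enemy King at distance 5.
\item \emph{Checks by other dropped pieces.} Silver, Gold, Rook, Bishop and Lance drops may also give check, or may not. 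In either case the King capture is an available legal reply.
\end{itemize}
Combining the transfers with \cref{lemma:between_area2} yields mutual reachability of all KK positions with Kings on 5h and 5b.
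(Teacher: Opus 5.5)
Your proof is correct and takes essentially the paper's approach: both reduce the lemma to single-piece transfers between the hands, made by dropping a piece that the opposing King then captures (the paper shows this for a Pawn in a figure and treats the other piece types and the reverse direction as analogous). You differ only in two details: you use \cref{lemma:between_area2} to bring the Kings back to 5h/5b with the desired side to move instead of an explicit return sequence, and you spell out the legality checks (Two Pawns, pieces with no legal moves, Drop Pawn Mate, the King's capture being safe) that the paper leaves implicit.
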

\begin{proof}
	In a KK position where Black's King is on 5h, White's King is on 5b, and it is Black's turn, any type of piece held by Black can be transferred to White's hand. In fact, by the procedure shown in \figref{pawn_to_player2}, we can create a KK position with the same side to move and board configuration (King positions) but with a different distribution of pieces in hand. By analogous move sequences, other types of pieces can also be transferred from Black to White, and conversely from White to Black. Therefore, all KK positions with Black's King on 5h and White's King on 5b are mutually reachable.
\end{proof}

\begin{lemma}\label{lemma:between_kk}
	Any two KK positions $(t, \bm{b}, \bm{h})$ and $(t', \bm{b'}, \bm{h'})$ are mutually reachable.
\end{lemma}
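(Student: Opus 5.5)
The plan is to route every pair of KK positions through a fixed ``hub'' family: the KK positions with Black's King on 5h and White's King on 5b. Mutual reachability is symmetric and transitive, so it suffices to prove one claim. Every KK position $(t,\bm{b},\bm{h})$ is mutually reachable with some hub position.

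Fix an arbitrary KK position $(t,\bm{b},\bm{h})$. Let $q=(\mathrm{B},\bm{b}_0,\bm{h})$ be the position with Black's King on 5h, i.e.\ $(4,7)$, White's King on 5b, i.e.\ $(4,1)$, Black to move, and the \emph{same} pieces in hand $\bm{h}$. First I check that $q$ is a KK position: the Kings differ by $6\ge 2$ in $y$, and only Kings are on the board. By \cref{lemma:between_area2}, KK positions with identical pieces in hand are mutually reachable, so $(t,\bm{b},\bm{h})$ and $q$ are mutually reachable. This step is where the hand stays fixed; \cref{lemma:between_area2} already absorbs the King relocation and the side-to-move swap.

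Now take two arbitrary KK positions $(t,\bm{b},\bm{h})$ and $(t',\bm{b'},\bm{h'})$. Let $q$ and $q'$ be their hub positions as above. These two hub positions have the same Kings on 5h/5b and the same side to move, and differ only in the distribution of pieces in hand. By \cref{lemma:kk5258}, $q$ and $q'$ are mutually reachable. One point to verify is that $\bm{h}$ and $\bm{h'}$ lie in the scope of that lemma. They do: in any KK position all 38 non-King pieces are in hand, so the hands differ only in how these pieces are split between Black and White. This is exactly the transfer setting of \cref{lemma:kk5258}, which moves each piece type in either direction. Chaining the steps gives
\begin{equation}
(t,\bm{b},\bm{h}) \leftrightarrow q \leftrightarrow q' \leftrightarrow (t',\bm{b'},\bm{h'}),
\end{equation}
and transitivity yields the statement.

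The main subtlety is the middle step. We must make sure \cref{lemma:kk5258} really connects \emph{all} hand distributions, including the extreme ones where one player holds nothing. In particular, transferring a piece from White to Black requires the analogous procedure with roles reversed, possibly from a White-to-move hub position. I would handle this by applying the lemma's Black-to-White and White-to-Black transfers one piece at a time. Where a transfer needs the other side to move, I would insert the side-to-move swap lemma, which keeps the board and hand fixed. Pieces are moved individually until $\bm{h}$ has become $\bm{h'}$. Each intermediate position remains a KK hub position, so the chain stays inside the set of KK positions.
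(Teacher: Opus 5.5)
Your proof is correct and is essentially the paper's own argument: move each position to a hub with Black's King on 5h and White's King on 5b and the same hand via \cref{lemma:between_area2}, change the hand distribution between hubs via \cref{lemma:kk5258}, and return via \cref{lemma:between_area2}. Your closing paragraph on piece-by-piece transfers with side-to-move swaps only unpacks \cref{lemma:kk5258}, which the paper already states as covering all such hub positions regardless of hand distribution or side to move.
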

\begin{proof}
	By \cref{lemma:between_area2}, any KK position $(t, \bm{b}, \bm{h})$ is mutually reachable with a KK position having pieces in hand $h$, Black's King on 5h, and White's King on 5b. Furthermore, by \cref{lemma:kk5258}, this is mutually reachable with a KK position having pieces in hand $h'$, Black's King on 5h, and White's King on 5b. Applying \cref{lemma:between_area2} again, we conclude mutual reachability with the KK position $(t', \bm{b'}, \bm{h'})$.
\end{proof}

\begin{table}[tb]
	\caption{An example of a move sequence from the initial position to a KK position}
	\label{tab:init_to_kk}
	\begin{tabular}{|llll|}
		\hline
		\sente P-7f  & \gote R-3b  & \sente Bx3c= & \gote Rx3c            \\
		\sente R-3h  & \gote Rx3g= & \sente Rx3g  & \gote +Bx9i           \\
		\sente Rx3a= & \gote +Bx8i & \sente Rx2a= & \gote +Bx6g           \\
		\sente Rx2c= & \gote +Bx7f & \sente Rx1c= & \gote +Bx8g           \\
		\sente Rx4c= & \gote Lx1g= & \sente Lx1g  & \gote +Bx9g           \\
		\sente Rx5c= & \gote K-4b  & \sente Rx6c= & \gote +Bx7i           \\
		\sente Rx7c= & \gote +Bx5g & \sente Rx7a= & \gote +Bx3i           \\
		\sente Rx8a= & \gote +Bx1g & \sente Rx8c= & \gote +Bx2g           \\
		\sente G-3h  & \gote +Bx3h & \sente Rx9c= & \gote +Bx2i           \\
		\sente Rx9a= & \gote +Bx4g & \sente Rx6a= & \gote +Bx6i           \\
		\sente Kx6i  & \gote K-5b  & \sente Rx4a= & \gote Kx4a (44 plies) \\
		\hline
	\end{tabular}
\end{table}

\begin{thm}\label{thm:initial-kk-each-other}
	Every KK position is reachable from the initial position, and the initial position is reachable from every KK position.
\end{thm}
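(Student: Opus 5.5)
The plan is to reduce both directions to \cref{lemma:between_kk}, using one explicit KK position reachable from the initial position and one explicit game leading back to the initial position.

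\textbf{Forward direction.} Table~\ref{tab:init_to_kk} gives a 44-ply sequence from the initial position. I will check that every move in it is legal: no Two Pawns, no piece left with no legal moves, no suicide move, no ignored check, and no Drop Pawn Mate. The final position then has only the two Kings on the board. Black's King is on 6i, i.e.\ $(3,8)$, and White's King is on 4a, i.e.\ $(5,0)$, so $|y_{K_\mathrm{B}}-y_{K_\mathrm{W}}|=8\ge 2$. This is therefore a KK position $q_0$. By \cref{lemma:between_kk}, $q_0$ reaches every other KK position. Concatenating the two sequences shows that every KK position is reachable from the initial position.

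\textbf{Backward direction.} By \cref{lemma:between_kk}, it suffices to exhibit a single KK position $q_1$ from which the initial position is reachable. The initial position has no pieces in hand, so everything must be dropped back from hand. I will choose $q_1$ as follows:
\begin{itemize}
\item Black's King is on 5i and White's King is on 5a.
\item Black holds exactly Black's initial army minus the King (9 Pawns, 2 Lances, 2 Knights, 2 Silvers, 2 Golds, Rook, Bishop), and White holds the symmetric set.
\item It is Black to move.
\end{itemize}
From $q_1$ the players alternately drop their pieces onto their own initial squares. Black has 19 pieces to drop and so does White, so after 38 plies it is again Black to move, with exactly the initial board, empty hands, and side to move. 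This is the initial position.

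For each drop I must verify the following:
\begin{itemize}
\item Pawns are dropped on rank 7 (respectively rank 3), one per file, so Two Pawns never occurs. Knights go on rank 9 or 1 of their own side and Lances on rank 9 or 1, so neither is ever a piece with no legal moves.
\item No drop gives check. Every piece sits in its own camp, far from the enemy King. The Kings are on the same file, but a Lance or Rook could only attack along that file, and before any Pawns are down the 5th file is open. To be safe, I will drop the 5th-file Pawns first (5g and 5c). This blocks the file, so no later drop attacks a King, and the question of Drop Pawn Mate never arises because no drop gives check.
\end{itemize}

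The main obstacle is the legality bookkeeping of the 44-ply sequence in the table. Each Bishop/Horse and Rook capture must not pass through occupied squares and must not leave the mover's own King in check. Promotions marked with $=$ (non-promotion) must be permitted, i.e.\ the piece must retain legal moves. I will replay the table move by move, tracking the checks on each King, in particular around the moves K-4b, Kx6i and Kx4a. The drop-back sequence is routine by comparison, provided the drop order is chosen so that no drop ever checks.
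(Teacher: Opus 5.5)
Your proposal is correct and follows the paper's proof essentially step for step. The 44-ply sequence of Table~\ref{tab:init_to_kk} reaches one KK position; the KK position with Kings on 5i and 5a and identical hands returns to the initial position by alternating drops; and \cref{lemma:between_kk} connects every KK position to both. Your precaution of dropping the 5th-file Pawns first is harmless but unnecessary, because no Rook, Lance or Bishop on its initial square lies on a line through the enemy King's square, even when that line is unblocked.
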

\begin{proof}
	From the initial position, a KK position can be reached via the move sequence in \tabref{tab:init_to_kk}\footnote{The move sequence in \tabref{tab:init_to_kk} was found by Weighted A* search from the initial position. There exist countless other move sequences from the initial position to a KK position. Furthermore, determining the shortest move sequence from the initial position to a KK position remains an open problem.}. Furthermore, consider a KK position with Black to move, in which Black's King on 5i, White's King on 5a, and the pieces in hand are distributed identically between the two players. From this position, the initial position can be reached by dropping pieces onto their initial squares in an appropriate order. By \cref{lemma:between_kk}, all KK positions are mutually reachable, thus the initial position and every KK position are mutually reachable.
\end{proof}

\end{document}